\documentclass{colt2012} 





\usepackage{amsmath, amssymb, multirow, paralist}
\usepackage{algorithm,algorithmic}
\usepackage{color}

\usepackage{hyperref}
\hypersetup{
colorlinks=true,
citecolor=blue,
linkcolor = {red},
linkbordercolor={1 1 1}, 
citebordercolor={1 1 1} 
}

\def \x {\mathbf{x}}
\def \e {\mathbf{e}}

\def \z {\mathbf{z}}
\def \f {\mathbf{f}}
\def \P {\mathcal{P}}
\def \R {\mathbb{R}}
\def \u {\mathbf{u}}

\def \E {\mathrm{E}}

\def \R {\mathbb{R}}
\def \regret {\mbox{regret}}
\def \f {\mathbf{f}}
\def \VAR {\mbox{VAR}}

\begin{document}

\title[Regret Bound by Variation]{Regret Bound by Variation for Online Convex Optimization}

 \coltauthor{\Name{Tianbao Yang$^\dagger$} \Email{yangtia1@msu.edu}\\
\Name{Mehrdad Mahdavi$^\dagger$} \Email{mahdavim@cse.msu.edu}\\
\Name{Rong Jin$^\dagger$} \Email{rongjin@cse.msu.edu}\\
\Name{Shenghuo Zhu$^*$} \Email{zsh.@sv.nec-labs.com}\\
       \addr $^\dagger$Department of Computer Science and Engineering\\
       Michigan State University, East Lansing, MI  48824, USA\\
       \addr $^*$NEC Laboratories America, Cupertino, CA 95014, USA
 }


\maketitle

\begin{abstract}

In~\citep{Hazan-2008-extract}, the authors showed that the regret of the Follow the Regularized Leader (FTRL) algorithm for online linear optimization can be bounded by the total variation of the cost vectors. In this paper, we extend this result to general online convex optimization. We first analyze the limitations of  the FTRL algorithm in~\citep{Hazan-2008-extract} when applied  to online convex optimization, and extend the definition of variation to a sequential variation which is shown to be a lower bound of the total variation. We then present two novel algorithms that bound the regret by the sequential variation of cost functions. Unlike previous approaches that maintain a single sequence of solutions, the proposed algorithms  maintain two sequences of solutions that makes it possible to achieve a variation-based regret bound for online convex optimization. 

\end{abstract}
\begin{keywords}
online convex optimization, regret bound, variation, bandit
\end{keywords}

\section{Introduction}

We consider the general online convex optimization problem~\citep{DBLP:conf/icml/Zinkevich03} which proceeds in trials. At each trial, the learner is asked to predict the decision vector $\x_t$ that belongs to a bounded closed convex set $\P\subseteq\mathbb R^d$; it then receives a cost function $c_t(\cdot):\P\rightarrow \R$ and incurs a cost of $c_t(\x_t)$. The goal of online convex optimization is to come up with a sequence of solutions $\x_1, \ldots, \x_T$ that minimizes the regret, which is defined as the difference in the cost of the sequence of decisions accumulated up to the trial $T$ made by the learner and the cost of the best fixed decision in hindsight, i.e.
\begin{align*}
\regret = \sum_{t=1}^T c_t(\x_t) -\min_{\x\in\P}\sum_{t=1}^T c_t(\x).
\end{align*}
In a special case, when the cost functions are linear $c_t(\x) = \f_t^{\top}\x$, the problem becomes the online linear optimization. The goal of online convex optimization is to design algorithms that predict, with a small regret, the solution $\x_t$ at the $t$th trial given the (partial) knowledge about the past cost functions $c_\tau(\cdot), \tau=1,\cdots, t-1$. Many algorithms have been proposed for online convex optimization, especially for online linear optimization.  \citet{DBLP:conf/icml/Zinkevich03} proposed a gradient descent algorithm for online convex optimization with a regret bound of $O(\sqrt{T})$. When cost functions are strongly convex, the regret bound of the online gradient descent algorithm is reduced to $O(\log(T))$ with appropriately chosen step size~\citep{Hazan:2007:LRA:1296038.1296051}, and to $O(1)$ by a more recent work~\citep{DBLP:journals/jmlr/HazanK11a}. Another common methodology for online convex optimization, especially for online linear optimization, is based on the framework of Follow the Leader (FTL)~\citep{Kalai:2005:EAO:1113185.1113189}. FTL chooses $\x_t$ by minimizing the cost incurred by $\x_t$ in all previous trials. Since the naive FTL algorithm fails to achieve a sublinear regret in the worst case, many variants have been developed to fix the problem, including Follow the Perturbed Leader (FTPL)~\citep{Kalai:2005:EAO:1113185.1113189}, Follow the Regularized Leader (FTRL)~\citep{DBLP:conf/colt/AbernethyHR08}, and Follow the Approximate Leader (FTAL)~\citep{Hazan:2007:LRA:1296038.1296051}. Other methodologies for online convex optimization introduce a potential function (or link function) to maps solutions between the space of primal variables and the space of dual variables, and carry out primal-dual update based on the potential function.  The well-known Exponentiated Gradient (EG) algorithm~\citep{Kivinen:1995:AVE:225058.225121}  or multiplicative weights algorithm~\citep{Freund:1995:DGO:646943.712093} belong to this category. We note that these different algorithms are closely related. For example,  in online linear optimization, the potential-based primal-dual algorithm is equivalent to FTRL algorithm~\citep{Hazan-2008-extract}.  All of these studies bound the regret by the number of trials $T$.

An open problem posed in~\citep{citeulike:2404144} was whether it is possible to derive a regret bound for an online algorithm by the variation of the observed costs. It has been established as a fact that the regret of a natural algorithm in a stochastic setting can be bounded by the total variation in the cost vectors~\citep{DBLP:journals/ml/HazanK10}. Therefore, it is of great interest to derive a variation-based regret bound for online convex optimization in an adversarial setting (vs. stochastic setting). Recently \citep{Hazan-2008-extract, DBLP:journals/ml/HazanK10} made a substantial progress in this route. They proved a variation-based regret bound for online \textit{linear} optimization by the FTRL algorithm with an appropriately chosen step size. A similar regret bound is shown in the same paper for prediction from expert advice by modifying the multiplicative weighted algorithm.  In this work, we aim to take one step further. Our goal is to develop algorithms for online \textit{convex} optimization with variation-based regret bounds. In the remaining of this section, we first present the results from \citep{Hazan-2008-extract,DBLP:journals/ml/HazanK10} for online linear optimization and discuss its potential limitations when applied to online convex optimization.

 \subsection{Online Linear Optimization}

Many decision problems can be cast into online linear optimization problems, such as prediction from expert advice~\citep{Cesa-Bianchi:2006:PLG:1137817}, online shortest path problem~\citep{Takimoto:2003:PKM:945365.964295}. \citep{Hazan-2008-extract, DBLP:journals/ml/HazanK10} proved the first variation-based regret bound for online linear optimization problems in an adversarial setting.  Hazan and Kale's  algorithm for online \textit{linear} optimization is based on the framework of FTRL. For completeness, the algorithm is shown in Algorithm~\ref{alg:0}.  At each trial, the decision vector $\x_t$ is given by solving the following optimization problem:
\begin{align*}
\x_t=\arg\min_{\x\in\P} \sum_{\tau=1}^{t-1}\f_\tau^{\top}\x +  \frac{1}{2\eta}\|\x\|_2^2,
\end{align*}
where $\f_t$ is the cost vector received at trial $t$ after predicting the decision $\x_t$, and $\eta$ is a step size.  They bound the regret by the variation of cost vectors defined as
\begin{eqnarray}
    \VAR_T=\sum_{t=1}^T\|\f_t-\mu\|_2^2,\label{eqn:var-linear}
\end{eqnarray}
where $\mu=1/T\sum_{t=1}^T \f_t$. By assuming $\|\f_t\|_2\leq 1, \forall t$ and setting $\eta=\min(2/\sqrt{\text{VAR}_T}, 1/6)$, they showed that the regret of Algorithm~\ref{alg:0} can be bounded by
\begin{align}\label{eqn:boundvar}
\sum_{t=1}^T \f_t^{\top}\x_t -\min_{\x\in\P}\sum_{t=1}^T \f_t^{\top}\x\leq\left\{\begin{array}{lc} 15\sqrt{\text{VAR}_T}& \text{ if }\sqrt{\text{VAR}}_T\geq 12\\ 150& \text{ if } \sqrt{\text{VAR}}_T\leq 12\end{array}\right..
\end{align}
From (\ref{eqn:boundvar}), we can see that when the variation of the cost vectors is small (less than $12$), the regret is a constant, otherwise it is bounded by the variation $O\left(\sqrt{\text{VAR}}_T\right)$.

\begin{algorithm}[t]
\center \caption{Follow The Regularized Leader (FTRL) for Online
Linear Optimization}
\begin{algorithmic}[1] \label{alg:0}
    \STATE {\bf Input}: $\eta>0$

    \FOR{$t = 1, \ldots, T$}
        \STATE  If $t=1$, predict $\x_t=0$
        \STATE  If $t>1$, predict $\x_t$ by $\x_{t} =  \mathop{\arg\min}\limits_{\x \in \P}  \sum_{\tau=1}^{t-1} \f_{\tau}^{\top}\x +\frac{1}{2\eta}\|\x\|_2^2$
        \STATE Receive a cost vector $\f_t$ and incur a loss $\f_t^{\top}\x_t$
    \ENDFOR
\end{algorithmic}
\end{algorithm}

\subsection{Online Convex Optimization}
Online convex optimization generalizes online linear optimization by replacing linear cost functions with non-linear convex cost functions. It has found applications in several domains, including portfolio management~\citep{Agarwal:2006:APM:1143844.1143846}, online classification~\citep{onlinekernellearning}. For example, in online portfolio management problem, an investigator wants to distribute his wealth over a set of stocks without knowing the market output in advance. If we let $\mathbf x_t$ denote the distribution on the stocks and $\mathbf r_t$ denote the price relative vector, i.e. $r_t[i]$ denote the  the ratio of the closing price of stock $i$ on day $t$ to the closing price on day $t-1$,  then an interesting function is the logarithmic growth ratio, i.e. $\sum_{t=1}^T \log(\mathbf x_t^{\top}\mathbf r_t)$, which is a concave function need to be maximized. Similar to~\citep{Hazan-2008-extract,  DBLP:journals/ml/HazanK10}, we aim to develop algorithms for online convex optimization with regrets bounded by the variation in the cost functions. Before presenting our algorithms, below we first show that directly applying the FTRL algorithm to general online convex optimization may not be able to achieve the desirable result.


%

To extend FTRL for online convex optimization, a straightforward approach is to use the first order approximation
for convex cost function, i.e., $c_t(\x)\simeq c_t(\x_t) + \nabla c_t(\x_t)^{\top}(\x-\x_t)$, and replace the cost
vector $\f_t$ in Algorithm~\ref{alg:0} with the gradient of the cost function $c_t(\cdot)$ at $\x_t$, i.e.
$\f_t=\nabla c_t(\x_t)$. Using  the convexity of $c_t(\cdot)$, we have
\begin{align}\label{eqn:boundvar2}
\sum_{t=1}^T c_t(\x_t) -\min_{\x\in\P}\sum_{t=1}^T c_t(\x)\leq \sum_{t=1}^T \f_t^{\top}\x_t -\min_{\x\in\P}\sum_{t=1}^T \f_t^{\top}\x.
\end{align}
If we assume $\|\nabla c_t(\x)\|_2\leq 1, \forall t, \forall \x \in \P$, we can  apply Hazan and Kale's variation-based bound in~(\ref{eqn:boundvar}) to bound the regret in~(\ref{eqn:boundvar2}) by the variation
\begin{align}\label{eqn:var2}
\text{VAR}_{T}=\sum_{t=1}^T\|\f_t -\mu\|_2^2 = \sum_{t=1}^T\left \|\nabla c_t(\x_t) - \frac{1}{T}\sum_{\tau=1}^T \nabla c_\tau(\x_\tau)\right\|_2^2.
\end{align}
To better understand $\VAR_T$ in~(\ref{eqn:var2}), we rewrite $\text{VAR}_T$ as
\begin{align}\label{eqn:var}
\text{VAR}_T & = \sum_{t=1}^T\left \|\nabla c_t(\x_t) - \frac{1}{T}\sum_{\tau=1}^T \nabla c_\tau(\x_\tau)\right\|_2^2 =
\frac{1}{2T} \sum_{t,\tau = 1}^T \|\nabla c_t(\x_t) - \nabla
c_\tau(\x_\tau)\|^2 \notag\\
& \leq  \frac{1}{T} \sum_{t=1}^T \sum_{\tau=1}^T \|\nabla c_t(\x_t) -
\nabla c_t(\x_\tau)\|_2^2 + \frac{1}{T} \sum_{t=1}^T \sum_{\tau=1}^T
\|\nabla c_t(\x_\tau) - \nabla c_\tau(\x_\tau)\|_2^2 \notag\\
& =  \text{VAR}^1_T + \text{VAR}_T^2.
\end{align}
We see that the variation $\VAR_T$ is bounded by two parts: $\VAR^1_T$ essentially measures the smoothness of individual cost functions, while $\VAR^2_T$ measures the variation in the gradients of cost functions. As a result, even when all the cost functions are identical, $\VAR^2_T$ vanishes, while $\VAR^1_T$ still exists, and therefore the regret of the FTRL algorithm for online convex optimization may still be bounded by $O(\sqrt{T})$ regardless of the smoothness of the cost function.

To address this challenge, we develop two novel algorithms for online convex optimization that bound the regret by the variation of cost functions.  In particular, we would like to bound the regret of online convex optimization by the variation of cost functions defined as follows
\begin{eqnarray}
    \VAR^{s}_T = \sum_{t=1}^{T-1} \max\limits_{\x \in \P} \|\nabla c_{t+1}(\x) - \nabla c_{t}(\x)\|_2^2. \label{eqn:var-1}
\end{eqnarray}
Note that the variation in~(\ref{eqn:var-1}) is defined in terms of sequential difference between individual cost function to its previous one, while the variation in~(\ref{eqn:var-linear})~\citep{Hazan-2008-extract} is defined in terms of total difference between individual cost vectors to their mean. Therefore we refer to the variation defined in~(\ref{eqn:var-1})  as \textit{sequential variation}, and to the variation defined in~(\ref{eqn:var-linear}) as \textit{total variation}. It is straightforward to show that when $c_t(\x) = \f_t^{\top}\x$, the sequential variation $\VAR^s_T$ defined in (\ref{eqn:var-1}) is upper bounded  by the total variation $\VAR_T$ defined in (\ref{eqn:var-linear}) with a constant factor:
\begin{align*}
\sum_{t=1}^{T-1}\|\f_{t+1} - \f_t\|_2^2 \leq \sum_{t=1}^{T-1} 2\|\f_{t+1} - \mu \|_2^2 + 2\|\f_t - \mu \|_2^2 \leq 4 \sum_{t=1}^{T}\|\f_t-\mu\|_2^2.
\end{align*}
On the other hand, we can not bound the total variation by the sequential variation up to a constant. This is verified by the following example: $\f_1=\cdots=\f_{T/2} = \f$ and $\f_{T/2+1}=\cdots=\f_T =\mathbf g \neq \f$. The total variation in~(\ref{eqn:var-linear}) is given by
\[
\VAR_T= \sum_{t=1}^T\|\f_t -\mu\|_2^2 = \frac{T}{2}\left\|\f- \frac{\f + \mathbf g}{2}\right\|_2^2 + \frac{T}{2}\left\|\mathbf g - \frac{\f+ \mathbf g}{2}\right\|_2^2  = O(T),
\]
while the sequential variation defined in~(\ref{eqn:var-1}) is a constant given by
\begin{align*}
\VAR^s_T = \sum_{t=1}^{T-1} \|\f_{t+1} - \f_t\| _ 2^2  =  \|\f- \mathbf g\|_2^2  = O(1).
\end{align*}
Based on the above analysis, we claim that the regret bound by sequential variation is usually tighter than by total variation.

The remainder of the paper is organized as follows. We present in section~\ref{sec:algorithm} the proposed algorithms and the main results. In section~\ref{sec:conc}, we conclude this work and discuss how to extend the proposed  algorithms to online bandit convex optimization with  a variation-based regret bound. 

\section{Algorithms and Main Results}
\label{sec:algorithm}
Without loss of generality, we assume the decision set $\P$ is contained in a unit ball $\mathcal B$, i.e., $\P\subseteq\mathcal B$, and $0\in\P$~\citep{Hazan-2008-extract}. We propose two algorithms for online convex optimization. The first algorithm is an improved FTRL and the second one is based on the mirror prox method~\citep{Nemirovski2005}. One common feature shared by the two algorithms is that both of them maintain two sequences of solutions: decision vectors $\x_{1:T}=(\x_1,\cdots,\x_T)$ and searching vectors $\z_{1:T}=(\z_1,\cdots, \z_T)$ that facilitate the updates of decision vectors. Both algorithms share almost the same regret bound except for a constant factor. 
To facilitate the discussion, besides the variation of cost functions defined in (\ref{eqn:var-1}), we define another variation, named \textit{extended sequential variation}, as follows
\begin{align}\label{eqn:vard}
\text{EVAR}^s_T(\z_{1:T})=\sum_{t=0}^{T-1}\|\nabla c_{t+1}(\z_t)-\nabla c_{t}(\z_t)\|_2^2 \leq \|\nabla c_1(\z_0)\|_2^2 + \VAR_T^s,
\end{align}
where $c_0(\x) = 0$ and $\z_0$ is specified in algorithms (usually is zero). When all cost functions are identical, $\text{VAR}^s_T$ becomes zero and the extended variation $\text{EVAR}^s_T(\z_{1:T})$ is reduced to $\|\nabla c_1(\z_0)\|_2^2$, a constant independent from the number of trials. In the sequel, we use the notation $\text{EVAR}^s_T$ for simplicity. In this study, we assume smooth cost functions with Lipschtiz continuous gradients, i.e. there exists a constant $L>0$ such that
\begin{align}\label{eqn:smooth}
\|\nabla c_t(\x)-\nabla c_t(\z)\|_2\leq L\|\x-\z\|_2, \forall \x, \z\in\P, \forall t.
\end{align}
Our results show that for online convex optimization with $L$-smooth cost functions, the regrets of the proposed algorithms can be bounded as follows
\begin{align}\label{eqn:var-main}
\sum_{t=1}^Tc_t(\x_t) -\min_{\x\in\P}\sum_{t=1}^Tc_t(\x)\leq O\left(\sqrt{\text{EVAR}^s_T}\right) + \text{constant}.
\end{align}


\noindent \textbf{Remark:} We would like to emphasize that our assumption about the smoothness of cost functions is necessary to achieve the variation-based bound stated in (\ref{eqn:var-main}). To see this, consider the special case of $c_1(\x) = \cdots = c_T(\x) = c(\x)$. If the bound in~(\ref{eqn:var-main}) holds for any sequence of convex functions, then for the special case where all cost functions are identical, we will have
\begin{align*}
\sum_{t=1}^T c(\x_t) \leq \min_{\x\in\P} \sum_{t=1}^T c(\x)  + O(1),
\end{align*}
implying that $\widehat\x_T = (1/T)\sum_{t=1}^T \x_t$ approaches the optimal solution at the rate of $O(1/T)$. This contradicts the lower complexity bound (i.e. $O(1/\sqrt{T})$) for any first order optimization method~\citep[Theorem 3.2.1]{citeulike:4123765}.

%
%
%

\subsection{An Improved FTRL Algorithm for Online Convex Optimization}\label{sec:IFTRL}

\begin{algorithm}[t]
\center \caption{Improved FTRL  for Online Convex Optimization}
\begin{algorithmic}[1] \label{alg:2}
    \STATE {\bf Input}: $\eta\in(0,1]$

    \STATE {\bf Initialization}: $\z_0 = \mathbf{0}$ and $c_0(\x) =
    0$

    \FOR{$t = 1, \ldots, T$}
        \STATE Predict $\x_t$ by
        $
            \x_t = \mathop{\arg\min}\limits_{\x \in \P} \left\{\x^{\top}\nabla c_{t-1}(\z_{t-1})+
            \frac{L}{2\eta}\|\x - \z_{t-1}\|_2^2
            \right\}
        $
        \STATE Receive a cost function $c_t(\cdot)$ and incur a loss $c_t(\x_t)$
        \STATE Update  $\z_{t}$ by
        $
           \z_{t} = \mathop{\arg\min}\limits_{\x \in \P} \left\{\sum_{\tau=1}^t \nabla c_{\tau}(\z_{\tau-1})^{\top}\x +\frac{L}{2\eta}\|\x\|_2^2\right\}$
    \ENDFOR
\end{algorithmic}
\end{algorithm}

The improved FTRL algorithm for online convex optimization is presented in Algorithm~\ref{alg:2}. Note that in step 6, the searching vectors $\z_t$ are updated according to the FTRL algorithm after receiving the cost function $c_t(\cdot)$. To understand the updating procedure for the decision vector $\x_t$ specified in step 4, we rewrite it as
        \begin{align}
            \x_t
            & = \mathop{\arg\min}\limits_{\x \in \P} \left\{ c_{t-1}(\z_{t-1})+ (\x - \z_{t-1})^{\top}\nabla c_{t-1}(\z_{t-1}) +
            \frac{L}{2\eta}\|\x - \z_{t-1}\|_2^2)
            \right\}. \label{eqn:update-x}
        \end{align}
Notice that
\begin{align}\label{eqn:upc}
c_t(\x)&\leq c_t(\z_{t-1}) + (\x-\z_{t-1})^{\top}\nabla c_{t}(\z_{t-1}) + \frac{L}{2}\|\x-\z_{t-1}\|_2^2\nonumber\\
&\leq c_t(\z_{t-1}) + (\x-\z_{t-1})^{\top}\nabla c_{t}(\z_{t-1}) + \frac{L}{2\eta}\|\x-\z_{t-1}\|_2^2,
\end{align}
where the first inequality follows the smoothness condition in~(\ref{eqn:smooth}) and the second inequality follows from the fact $\eta\leq 1$. The inequality~(\ref{eqn:upc}) provides an upper bound for $c_t(\x)$ and therefore can be used as an approximation of $c_t(\x)$ for predicting $\x_t$. However, since $\nabla c_t(\z_{t-1})$ is unknown before the prediction, we use $\nabla c_{t-1}(\z_{t-1})$ as a surrogate for $\nabla c_t(\z_{t-1})$, leading to the updating rule in (\ref{eqn:update-x}). It is this approximation that leads to the variation  bound.  The following theorem states the  regret bound of Algorithm~\ref{alg:2}.
\begin{theorem} \label{thm:1}
Let $c_t(\cdot), t=1, \ldots, T$ be a sequence of  convex
functions with $L$-Lipschitz continuous gradients.  By setting $\eta = \min\left\{1, L/\sqrt{\text{EVAR}^s_T} \right\}$, we have the
following regret bound for
Algorithm~\ref{alg:2}
\[
\sum_{t=1}^Tc_t(\x_t) -\min_{\x\in\P}\sum_{t=1}^Tc_t(\x) \leq \max\left(L, \sqrt{\text{EVAR}^s_T}\right).
\]
\end{theorem}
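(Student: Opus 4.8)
The plan is to reduce the convex regret to a linearized regret measured at the searching points $\z_{t-1}$, and then control that linearized regret through the first-order optimality conditions of the two updates in Algorithm~\ref{alg:2}. Throughout I write $\f_t = \nabla c_t(\z_{t-1})$ for the true gradient driving the $\z$-update and $\widehat{\f}_t = \nabla c_{t-1}(\z_{t-1})$ for the surrogate gradient driving the prediction of $\x_t$; by the indexing in~(\ref{eqn:vard}) this gives $\sum_{t=1}^T\|\f_t - \widehat{\f}_t\|_2^2 = \text{EVAR}^s_T$. Let $\x^* = \arg\min_{\x\in\P}\sum_t c_t(\x)$. Combining the smoothness upper bound~(\ref{eqn:upc}) at $\x_t$ with the convexity lower bound $c_t(\x^*) \ge c_t(\z_{t-1}) + (\x^* - \z_{t-1})^{\top}\f_t$ yields the per-trial inequality
\[
c_t(\x_t) - c_t(\x^*) \le \f_t^{\top}(\x_t - \x^*) + \frac{L}{2\eta}\|\x_t - \z_{t-1}\|_2^2 ,
\]
so it suffices to bound $\sum_t \f_t^{\top}(\x_t - \x^*) + \frac{L}{2\eta}\sum_t \|\x_t - \z_{t-1}\|_2^2$.

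Next I would split $\f_t^{\top}(\x_t - \x^*) = \f_t^{\top}(\z_t - \x^*) + \f_t^{\top}(\x_t - \z_t)$ and handle each piece with one of the two sequences. For the first piece, the $\z$-update is FTRL with the $(L/\eta)$-strongly convex regularizer $\frac{L}{2\eta}\|\cdot\|_2^2$, and I would establish the \emph{strengthened} regret identity $\sum_t \f_t^{\top}(\z_t - \x^*) \le \frac{L}{2\eta}\|\x^*\|_2^2 - \frac{L}{2\eta}\sum_t \|\z_t - \z_{t-1}\|_2^2$, carefully retaining the negative stability term rather than discarding it. For the second piece I would invoke the optimality of the $\x$-update (a genuine prox step from $\z_{t-1}$ with gradient $\widehat{\f}_t$) against the comparator $\z_t$, giving $\widehat{\f}_t^{\top}(\x_t - \z_t) \le \frac{L}{2\eta}\big(\|\z_{t-1} - \z_t\|_2^2 - \|\x_t - \z_t\|_2^2 - \|\x_t - \z_{t-1}\|_2^2\big)$, and then write $\f_t^{\top}(\x_t - \z_t) = \widehat{\f}_t^{\top}(\x_t - \z_t) + (\f_t - \widehat{\f}_t)^{\top}(\x_t - \z_t)$, bounding the last inner product by Young's inequality as $\frac{\eta}{2L}\|\f_t - \widehat{\f}_t\|_2^2 + \frac{L}{2\eta}\|\x_t - \z_t\|_2^2$. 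The main obstacle, and precisely where maintaining two sequences is essential, is the bookkeeping of the quadratic distance terms: the two copies of $\frac{L}{2\eta}\|\x_t - \z_{t-1}\|_2^2$ and of $\frac{L}{2\eta}\|\x_t - \z_t\|_2^2$ must cancel in pairs, and the positive $\frac{L}{2\eta}\|\z_{t-1} - \z_t\|_2^2$ created by the $\x$-step must be absorbed exactly by the negative stability term carried over from the $\z$-sequence bound. A single-sequence analysis instead leaves a residual proportional to $\|\f_t\|_2^2$ (the full gradient magnitude), which degrades the bound to $O(\sqrt{T})$; securing these cancellations is what converts the full gradient into the prediction error $\|\f_t - \widehat{\f}_t\|_2^2$, and verifying the strengthened FTRL identity in the constrained setting is the delicate technical point.

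After the cancellations, the accumulated contributions collapse to $\frac{L}{2\eta}\|\x^*\|_2^2 + \frac{\eta}{2L}\sum_t\|\f_t - \widehat{\f}_t\|_2^2$. Using $\|\x^*\|_2 \le 1$ (since $\P \subseteq \B$) and $\sum_t\|\f_t - \widehat{\f}_t\|_2^2 = \text{EVAR}^s_T$, this is at most $\frac{L}{2\eta} + \frac{\eta}{2L}\text{EVAR}^s_T$. Finally I would optimize the step size: if $\text{EVAR}^s_T \le L^2$, the choice $\eta = 1$ gives a bound of at most $\frac{L}{2} + \frac{1}{2L}\text{EVAR}^s_T \le L$; if $\text{EVAR}^s_T > L^2$, the choice $\eta = L/\sqrt{\text{EVAR}^s_T} \in (0,1)$ balances the two terms, each equal to $\tfrac12\sqrt{\text{EVAR}^s_T}$, for a total of $\sqrt{\text{EVAR}^s_T}$. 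Together these two regimes yield the claimed $\max\{L, \sqrt{\text{EVAR}^s_T}\}$ under the stated $\eta = \min\{1, L/\sqrt{\text{EVAR}^s_T}\}$.
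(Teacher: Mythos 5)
Your proposal is correct, and each step checks out: the per-trial linearization from smoothness plus convexity is valid under $\eta\le 1$; the strengthened FTRL bound $\sum_{t}\f_t^{\top}(\z_t-\x^*)\le \frac{L}{2\eta}\|\x^*\|_2^2-\frac{L}{2\eta}\sum_{t}\|\z_t-\z_{t-1}\|_2^2$ does hold in the constrained setting (telescope the potential $\phi_t=\min_{\x\in\P}\psi_t(\x)$, using strong convexity of $\psi_{t-1}$ together with the first-order optimality of $\z_{t-1}$ over $\P$, and $\phi_0=0$ since $0\in\P$); the three-point inequality for the prox step, Young's inequality, and the pairwise cancellations then yield exactly $\frac{L}{2\eta}+\frac{\eta}{2L}\text{EVAR}^s_T$, and your two-regime step-size analysis reproduces the stated $\max\left(L,\sqrt{\text{EVAR}^s_T}\right)$. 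Your route is, however, genuinely different in organization from the paper's. The paper proves an intermediate result (Lemma~\ref{lem:1}) by induction: it defines $\psi_t$ to aggregate the linearized costs at $\z_{\tau-1}$ together with the accumulated variation penalties, lower-bounds the increment $\phi_{t+1}-\phi_t$ by $c_{t+1}(\x_{t+1})$ minus a variation term (using strong convexity at $\z_t$, optimality of the $\x_{t+1}$ prox step, and smoothness), and applies convexity only once at the very end to pass from the linearized minimum to $\min_{\x\in\P}\sum_t c_t(\x)$. You dispense with the induction entirely: you linearize each round first, split the linear regret at $\z_t$, and dispatch the two pieces with two modular lemmas (strengthened be-the-leader for the $\z$-sequence; three-point prox inequality plus Young for the $\x$-step). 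The ingredients are the same, but your bookkeeping is per-round and additive rather than inductive, and in fact it mirrors the structure the paper uses for its second algorithm (Theorem~\ref{thm:2} via Lemma~\ref{lem:6}) more than its own proof of Theorem~\ref{thm:1}. What the paper's induction buys is that the negative stability terms never need to be named --- they are silently absorbed into the potential; what your decomposition buys is modularity and an explicit view of exactly which quadratic terms must cancel, which you correctly identify as the crux of why maintaining two sequences converts the full gradient magnitude into the prediction error $\|\nabla c_t(\z_{t-1})-\nabla c_{t-1}(\z_{t-1})\|_2^2$.
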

\textbf{Remark:}  Comparing with the variation bound in~(\ref{eqn:var}) for the FTRL algorithm, term $L$ plays the same role as $\text{VAR}^1_T$ that accounts for the smoothness of cost functions, and term $\text{EVAR}^s_T$ plays the same role as $\text{VAR}^2_T$ that accounts for the variation in the cost functions. Compared to the FTRL algorithm, the key advantage of the improved FTRL algorithm is that the regret bound is reduced to a constant  when the cost functions change only by a constant number of times along the horizon. Of course, the extended variation $\text{EVAR}^s_T$ may not be known apriori for setting the optimal $\eta$, we can apply the standard halving tricks~\citep{Cesa-Bianchi:2006:PLG:1137817} to obtain the same order of regret bound.
To prove Theorem~\ref{thm:1}, we first present the following lemma.
\begin{lemma}\label{lem:1}
Let $c_t(\cdot), t=1, \ldots, T$ be a sequence of convex
functions with $L$-Lipschitz continuous gradients. By running Algorithm~\ref{alg:2} over $T$ trials, we have
\begin{align*}
\sum_{t=1}^T c_t(\x_t) &\leq \min\limits_{\x \in \P}
\left[\frac{L}{2\eta}\|\x\|_2^2 + \sum_{t=1}^T c_t(\z_{t-1}) + (\x -
\z_{t-1})^{\top}\nabla c_t(\z_{t-1})\right]\\
& \hspace*{0.2in}+ \frac{\eta}{2L}
\sum_{t=0}^{T-1} \|\nabla c_{t+1}(\z_t) - \nabla c_{t}(\z_t)\|_2^2.
\end{align*}
\end{lemma}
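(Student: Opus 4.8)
The plan is to reduce the lemma to a single ``linearized'' regret inequality for the two sequences, and then to prove that inequality by pairing an FTRL stability bound for the searching vectors $\z_t$ with a one-step prox inequality for the decision vectors $\x_t$. Throughout I abbreviate $\mathbf g_t=\nabla c_t(\z_{t-1})$ (the gradient actually fed into the $\z$-update in step 6) and $\mathbf m_t=\nabla c_{t-1}(\z_{t-1})$ (the surrogate gradient used to form $\x_t$ in step 4), so the prediction error at step $t$ is $\mathbf g_t-\mathbf m_t=\nabla c_t(\z_{t-1})-\nabla c_{t-1}(\z_{t-1})$; after re-indexing $s=t-1$ this is exactly $\nabla c_{s+1}(\z_s)-\nabla c_s(\z_s)$, so $\sum_{t=1}^T\|\mathbf g_t-\mathbf m_t\|_2^2=\sum_{t=0}^{T-1}\|\nabla c_{t+1}(\z_t)-\nabla c_t(\z_t)\|_2^2$, the extended variation. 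I also write $\sigma=L/\eta$ for the strong-convexity modulus of $\frac{L}{2\eta}\|\x\|_2^2$.

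First I would apply the smoothness bound~(\ref{eqn:upc}) at $\x=\x_t$, giving $c_t(\x_t)\le c_t(\z_{t-1})+\mathbf g_t^{\top}(\x_t-\z_{t-1})+\tfrac{\sigma}{2}\|\x_t-\z_{t-1}\|_2^2$, and sum over $t$. Splitting $\mathbf g_t^{\top}(\x_t-\z_{t-1})=\mathbf g_t^{\top}(\x-\z_{t-1})+\mathbf g_t^{\top}(\x_t-\x)$ for an arbitrary comparator $\x\in\P$ reproduces the linear terms inside the minimum on the right-hand side of the lemma. It then suffices to establish the key inequality
\begin{align*}
\sum_{t=1}^T \mathbf g_t^{\top}(\x_t-\x)+\frac{\sigma}{2}\sum_{t=1}^T\|\x_t-\z_{t-1}\|_2^2 \le \frac{\sigma}{2}\|\x\|_2^2+\frac{1}{2\sigma}\sum_{t=1}^T\|\mathbf g_t-\mathbf m_t\|_2^2
\end{align*}
for every $\x\in\P$, since taking the minimum over $\x$ and recalling $\tfrac{1}{2\sigma}=\tfrac{\eta}{2L}$ yields the claim.

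Next I would treat the two sequences separately. Since $\z_t$ minimizes the $\sigma$-strongly convex objective $F_t(\x)=\sum_{\tau\le t}\mathbf g_\tau^{\top}\x+\tfrac{\sigma}{2}\|\x\|_2^2$ over $\P$, strong convexity of $F_{t-1}$ at its minimizer $\z_{t-1}$ gives $F_t(\z_t)-F_{t-1}(\z_{t-1})\ge \mathbf g_t^{\top}\z_t+\tfrac{\sigma}{2}\|\z_t-\z_{t-1}\|_2^2$; telescoping, together with $F_0(\z_0)=0$ and $F_T(\z_T)\le F_T(\x)$, produces the FTRL bound $\sum_t\mathbf g_t^{\top}(\z_t-\x)\le \tfrac{\sigma}{2}\|\x\|_2^2-\tfrac{\sigma}{2}\sum_t\|\z_t-\z_{t-1}\|_2^2$. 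For the decision vectors, optimality of the prox subproblem defining $\x_t$ gives the three-point inequality $\mathbf m_t^{\top}(\x_t-\u)\le \tfrac{\sigma}{2}\|\u-\z_{t-1}\|_2^2-\tfrac{\sigma}{2}\|\u-\x_t\|_2^2-\tfrac{\sigma}{2}\|\x_t-\z_{t-1}\|_2^2$ for all $\u\in\P$. Writing $\mathbf g_t^{\top}(\x_t-\x)=\mathbf g_t^{\top}(\z_t-\x)+\mathbf g_t^{\top}(\x_t-\z_t)$, I bound the first piece by the FTRL bound and the second by applying the three-point inequality with $\u=\z_t$, after splitting $\mathbf g_t=\mathbf m_t+(\mathbf g_t-\mathbf m_t)$.

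The crux, and the main obstacle, is the per-step bookkeeping of the quadratic terms. After these substitutions the positive $\tfrac{\sigma}{2}\|\x_t-\z_{t-1}\|_2^2$ from smoothness, the favorable $-\tfrac{\sigma}{2}\|\z_t-\z_{t-1}\|_2^2$ from the FTRL bound, and the $+\tfrac{\sigma}{2}\|\z_t-\z_{t-1}\|_2^2-\tfrac{\sigma}{2}\|\x_t-\z_{t-1}\|_2^2$ supplied by the three-point inequality must cancel exactly, leaving only a spare $-\tfrac{\sigma}{2}\|\x_t-\z_t\|_2^2$. This is precisely enough to absorb the cross term through Young's inequality, $(\mathbf g_t-\mathbf m_t)^{\top}(\x_t-\z_t)\le \tfrac{1}{2\sigma}\|\mathbf g_t-\mathbf m_t\|_2^2+\tfrac{\sigma}{2}\|\x_t-\z_t\|_2^2$, so each step contributes at most $\tfrac{1}{2\sigma}\|\mathbf g_t-\mathbf m_t\|_2^2$; summing gives the key inequality. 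The delicate point is that $\z_t$ is generated by a lazy FTRL step rather than a prox step from $\z_{t-1}$, so the usual optimistic-mirror-descent telescoping in $\|\x-\z_{t-1}\|_2^2-\|\x-\z_t\|_2^2$ is unavailable. Routing the comparison through the intermediate point $\z_t$ and exploiting the strong-convexity stability term of FTRL is exactly what makes the cancellation go through.
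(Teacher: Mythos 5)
Your proof is correct, and each step checks out: the reduction via~(\ref{eqn:upc}) combined with the split $\mathbf{g}_t^{\top}(\x_t-\z_{t-1})=\mathbf{g}_t^{\top}(\x-\z_{t-1})+\mathbf{g}_t^{\top}(\x_t-\x)$ is valid; the follow-the-leader-be-the-leader bound for the $\z$-sequence is sound (including the base fact $F_0(\z_0)=0$, which uses $0\in\P$); the three-point inequality with $\u=\z_t$ is the standard strong-convexity consequence of the optimality of $\x_t$; and the quadratic terms cancel exactly as you claim, leaving the spare $-\tfrac{\sigma}{2}\|\x_t-\z_t\|_2^2$ to absorb Young's inequality. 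Your architecture, however, is genuinely different from the paper's. The paper proves Lemma~\ref{lem:1} by induction on $T$: it packages the right-hand side into a potential $\psi_t(\x)$, sets $\phi_t=\min_{\x\in\P}\psi_t(\x)=\psi_t(\z_t)$, and shows the per-step inequality $\phi_{t+1}-\phi_t\geq c_{t+1}(\x_{t+1})-\tfrac{\eta}{2L}\|\nabla c_{t+1}(\z_t)-\nabla c_t(\z_t)\|_2^2$, so the comparator never appears explicitly until the very end. Unwinding that induction step reveals the same four ingredients you use---strong convexity of the FTRL objective at $\z_t$ (your stability term $-\tfrac{\sigma}{2}\|\z_t-\z_{t-1}\|_2^2$), strong convexity of the prox objective at $\x_{t+1}$ (your three-point inequality), exact minimization of a quadratic-plus-linear term (your Young's inequality), and smoothness---but chained inside a single inductive potential rather than deployed as a global decomposition routed through the points $\z_t$. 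What your version buys is modularity and transparency: the ``key inequality'' isolates a purely linearized statement about the two sequences, the FTL-BTL lemma and the three-point inequality are reusable off-the-shelf components, and the argument is essentially an optimistic-FTRL analysis with hint $\mathbf{m}_t=\nabla c_{t-1}(\z_{t-1})$, which generalizes immediately to Bregman distances (cf.\ Theorem~\ref{thm:2-1}) or to other choices of gradient prediction. What the paper's induction buys is compactness: a single potential carries all the bookkeeping, no intermediate comparator split has to be guessed, and the invariant $\phi_t\geq\sum_{\tau\leq t}c_\tau(\x_\tau)$ is self-contained at every step.
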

\begin{proof}
We prove the inequality by induction. When $T = 1$, we have $\x_1 =
\z_0 = 0$ and
\begin{align*}
&
\min\limits_{\x \in \P} \left[\frac{L}{2\eta}\|\x\|_2^2 +c_1(\z_0)+ (\x -
\z_0)^{\top}\nabla c_1(\z_0) \right] + \frac{\eta}{2L}\|\nabla c_1(\z_0)\|_2^2 \\
& \geq  c_1(\z_0) + \frac{\eta}{2L}\|\nabla c_1(\z_0)\|_2^2 +
\min\limits_{\x} \left\{\frac{L}{2\eta}\|\x\|_2^2 + (\x -
\z_0)^{\top}\nabla c_1(\z_0) \right\} = c_1(\z_0) = c_1(\x_1).
\end{align*}
We assume the inequality holds for $t$ and aim to prove it for $t+1$. To this end, we define
\begin{align*}
\psi_t(\x) &=  \left[\frac{L}{2\eta}\|\x\|_2^2 +
\sum_{\tau=1}^{t} c_\tau(\z_{\tau-1}) + (\x - \z_{\tau-1})^{\top}\nabla
c_\tau(\z_{\tau-1})\right] + \frac{\eta}{2L} \sum_{\tau=0}^{t - 1} \|\nabla
c_{\tau+1}(\z_\tau) - \nabla c_{\tau}(\z_\tau)\|_2^2.
\end{align*}
According to the updating procedure for $\z_t$ in step 6, we have $\z_{t}=\arg\min_{\x\in\P}\psi_{t}(\x)$. Define $\phi_t = \psi_t(\z_t) = \min_{\x\in\P}\psi_t(\x)$. Since $\psi_t(\x)$ is a $(L/\eta)$-strongly convex function, we can have
\begin{align*}
\psi_{t+1}(\x)-\psi_{t+1}(\z_{t})&\geq \frac{L}{2\eta}\|\x-\z_{t}\|_2^2 + (\x-\z_t)^{\top} \nabla\psi_{t+1}(\z_t)\\
&=\frac{L}{2\eta}\|\x-\z_{t}\|_2^2 + (\x-\z_t)^{\top} \left(\nabla\psi_{t}(\z_t)+\nabla c_{t+1}(\z_t)\right).
\end{align*}
Setting $\x=\z_{t+1}=\arg\min_{\x\in\P}\psi_{t+1}(\x)$ in the above inequality results in
\begin{align*}
\psi_{t+1}(\z_{t+1}) - \psi_{t+1}(\z_t) &= \phi_{t+1} - (\phi_t + c_{t+1}(\z_t)  + \frac{\eta}{2L}\|\nabla c_{t+1}(\z_t)
- \nabla c_t(\z_t)\|_2^2)\\
&\geq \frac{L}{2\eta}\|\z_{t+1}-\z_{t}\|_2^2 + (\z_{t+1}-\z_t)^{\top} \left(\nabla\psi_{t}(\z_t)+\nabla c_{t+1}(\z_t)\right)\\
&\geq \frac{L}{2\eta}\|\z_{t+1}-\z_{t}\|_2^2 + (\z_{t+1}-\z_t)^{\top}\nabla c_{t+1}(\z_t),
\end{align*}
where the second inequality follows from the fact $\z_t=\arg\min_{\x\in\P}\psi_t(\x)$, and therefore $(\x-\z_t)^{\top}\nabla\psi_t(\z_t)\geq 0, \forall\x\in\P$. Then we have
\begin{align}
\lefteqn{\phi_{t+1} - \phi_t - \frac{\eta}{2L}\|\nabla c_{t+1}(\z_t)
- \nabla c_t(\z_t)\|_2^2} \label{eqn:ind}\\
& \geq  \min\limits_{\x \in \P} \left\{ \frac{L}{2\eta}\|\x -\z_t\|_2^2 + (\x - \z_{t})^{\top}\nabla c_{t+1}(\z_t)+ c_{t+1}(\z_t)\right\} \nonumber\\
& =  \min\limits_{\x\in \P}\left\{  \underbrace{\frac{L}{2\eta}\|\x - \z_t\|_2^2 + (\x - \z_{t})^{\top}\nabla c_{t}(\z_t)}\limits_{\rho(\x)} + c_{t+1}(\z_t)+ \underbrace{(\x -
\z_{t})^{\top}(\nabla c_{t+1}(\z_t) - \nabla c_{t}(\z_t))}\limits_{r(\x)}\right\}.\nonumber
\end{align}
To bound the right hand side, we note that $\x_{t+1}$ is the minimizer of $\rho(\x)$ by step 4 in Algorithm~\ref{alg:2}, and $\rho(\x)$ is a $L/\eta$-strongly convex function, so we have
\begin{align*}
\rho(\x)\geq \rho(\x_{t+1}) + \underbrace{(\x-\x_{t+1})^{\top}\nabla\rho(\x_{t+1})}\limits_{\geq 0}+ \frac{L}{2\eta}\|\x-\x_{t+1}\|_2^2\geq \rho(\x_{t+1}) +  \frac{L}{2\eta}\|\x-\x_{t+1}\|_2^2.
\end{align*}
Then we have
\begin{align*}
\rho(\x) + c_{t+1}(\z_t) + r(\x) \geq  \rho(\x_{t+1})  + c_{t+1}(\z_t) +  \frac{L}{2\eta}\|\x-\x_{t+1}\|_2^2+ r(\x).
\end{align*}
We  proceed by bounding~(\ref{eqn:ind}) as
\begin{align*}
\phi_{t+1} &- \phi_t - \frac{\eta}{2L}\|\nabla c_{t+1}(\z_t)
- \nabla c_t(\z_t)\|_2^2\\
\geq&  \frac{L}{2\eta}\|\x_{t+1} - \z_t\|^2_2 +
(\x_{t+1} - \z_t)^{\top}\nabla c_t(\z_t)+ c_{t+1}(\z_t)  \\
&+ \min\limits_{\x\in \P}\left\{ \frac{L}{2\eta}\|\x -
\x_{t+1}\|_2^2 + (\x - \z_t)^{\top}(\nabla c_{t+1}(\z_t) - \nabla
c_{t}(\z_t))\right\} \\
=&  \frac{L}{2\eta}\|\x_{t+1} - \z_t\|^2_2  +(\x_{t+1} - \z_t)^{\top}\nabla c_{t+1}(\z_t)  + c_{t+1}(\z_t) \\
&  + \min\limits_{\x\in \P}\left\{ \frac{L}{2\eta}\|\x -\x_{t+1}\|_2^2 + (\x - \x_{t+1})^{\top}(\nabla c_{t+1}(\z_t) - \nabla c_{t}(\z_t))\right\} \\
\geq&  \frac{L}{2\eta}\|\x_{t+1} - \z_t\|^2_2  +(\x_{t+1} - \z_t)^{\top}\nabla c_{t+1}(\z_t)+ c_{t+1}(\z_t) \\
&+ \min\limits_{\x}\left\{ \frac{L}{2\eta}\|\x -\x_{t+1}\|_2^2 + (\x - \x_{t+1})^{\top}(\nabla c_{t+1}(\z_t) - \nabla c_{t}(\z_t))\right\} \\
=&  \frac{L}{2\eta}\|\x_{t+1} - \z_t\|^2_2 +
(\x_{t+1} - \z_t)^{\top}\nabla c_{t+1}(\z_t) + c_{t+1}(\z_t) -
\frac{\eta}{2L}\|\nabla
c_{t+1}(\z_t) - \nabla c_{t}(\z_t)\|_2^2 \\
\geq&  c_{t+1}(\x_{t+1}) - \frac{\eta}{2L}\|\nabla c_{t+1}(\z_t) -
\nabla c_{t}(\z_t)\|_2^2,
\end{align*}
where the first equality follows by writing
$(\x_{t+1} - \z_t)^{\top}\nabla c_t(\z_t)=
(\x_{t+1} - \z_t)^{\top}\nabla c_{t+1}(\z_t) -
(\x_{t+1} - \z_t)^{\top}(\nabla c_{t+1}(\z_t)-\nabla c_t(\z_t))$, and the last inequality follows from the smoothness condition of $c_{t+1}(\x)$.
Since $\phi_t \geq \sum_{\tau=1}^t c_{\tau}(\x_\tau)$, we have
$\phi_{t+1} \geq \sum_{\tau=1}^{t+1} c_\tau(\x_\tau)$.
\end{proof}

\begin{proof}[of Theorem~\ref{thm:1}]
By $\|\x\|_2\leq 1,\forall \x\in\P \subseteq \mathcal B$,  and the convexity of $c_t(\x)$, we have
 \[
 \min\limits_{\x \in \P}
\left\{\frac{L}{2\eta}\|\x\|_2^2 + \sum_{t=1}^T c_t(\z_{t-1}) + (\x -
\z_{t-1})^{\top}\nabla c_t(\z_{t-1})\right\}\leq \frac{L}{2\eta}+\min_{\x\in\P}\sum_{t=1}^Tc_t(\x).
 \]
 Combining the above result with Lemma~\ref{lem:1}, we have
 \begin{align*}
 \sum_{t=1}^Tc_t(\x_t)-\min_{\x\in\P}\sum_{t=1}^T c_t(\x)\leq \frac{L}{2\eta} + \frac{\eta}{2L}\text{EVAR}^s_T.
 \end{align*}
 By choosing $\eta=\min(1, L/\sqrt{\text{EVAR}^s_T})$, we have the regret bound in Theorem~\ref{thm:1}.
\end{proof}
\subsection{A Prox Method for Online Convex Optimization}
\label{sec:PM}
In this subsection, we present a prox method for online convex optimization that shares the same order of regret bound as the improved FTRL algorithm. It is closely related to the prox method in~\citep{Nemirovski2005} by maintaining two sets of vectors $\x_{1:T}$ and $\z_{1:T}$, where $\x_t$ and $\z_t$ are computed by gradient mappings using $\nabla c_{t-1}(\z_{t-1})$, and $\nabla c_t(\x_t)$, respectively, as
\begin{algorithm}[t]
\center \caption{Prox Method for
Online Convex Optimization}
\begin{algorithmic}[1] \label{alg:3}
    \STATE {\bf Input}: $\eta >0$

    \STATE {\bf Initialization}: $\z_0 = \mathbf{0}$ and $c_0(\x) =
    0$

    \FOR{$t = 1, \ldots, T$}
        \STATE Predict $\x_t$ by
        $
            \x_t = \mathop{\arg\min}\limits_{\x \in \P} \left\{\x^{\top}\nabla c_{t-1}(\z_{t-1})+
            \frac{L}{2\eta}\|\x - \z_{t-1}\|_2^2
            \right\}
        $
        \STATE Receive a cost function $c_t(\cdot)$ and incur a loss $c_t(\x_t)$
       \STATE Update $\z_t$ by
        $
            \z_{t} =\mathop{\arg\min}\limits_{\x \in \P} \left\{\x^{\top}\nabla c_{t}(\x_{t})+\frac{L}{2\eta}\|\x-\z_{t-1}\|_2^2 \right\}
        $
    \ENDFOR
\end{algorithmic}
\end{algorithm}
presented in Algorithm~\ref{alg:3}.  Algorithm~\ref{alg:3} only differs from Algorithm~\ref{alg:2} in updating the searching points $\z_t$. Algorithm~\ref{alg:2} updates $\z_t$ by the FTRL scheme using \textit{all} the gradients of the cost functions at $\{\z_{\tau}\}_{\tau = 1}^{t - 1}$, while Algorithm~\ref{alg:3} updates $\z_t$ by a prox method using a \textit{single} gradient $\nabla c_t(\x_t)$. It is this difference that makes it easier to extend the prox method to a bandit setting, which will be discussed in section~\ref{sec:conc}.  The following theorem states the regret bound of the prox method for online convex optimization.
\begin{theorem} \label{thm:2}
Let $c_t(\cdot), t=1, \ldots, T$ be a sequence of convex
functions with L-Lipschitz continuous gradients.  By setting
$\eta = (1/2)\min\left\{1, L/\sqrt{\text{EVAR}^s_T} \right\}$, we have the
following regret bound for Algorithm~\ref{alg:3}
\[
\sum_{t=1}^Tc_t(\x_t)-\min_{\x\in\P}\sum_{t=1}^Tc_t(\x) \leq 2\max\left(L, \sqrt{\text{EVAR}^s_T}\right).
\]
\end{theorem}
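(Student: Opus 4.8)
The plan is to analyze Algorithm~\ref{alg:3} through the lens of the mirror-prox / extragradient method rather than the FTRL-style induction used for Theorem~\ref{thm:1}, since here both $\x_t$ and $\z_t$ are prox-mappings anchored at the common point $\z_{t-1}$. The single tool driving the whole argument is the prox-mapping inequality: if $\x^+ = \arg\min_{\x\in\P}\{\langle \mathbf{g}, \x\rangle + \frac{L}{2\eta}\|\x-\z\|_2^2\}$, then the first-order optimality condition $\langle \mathbf{g} + \frac{L}{\eta}(\x^+-\z),\, \u-\x^+\rangle \geq 0$ together with a standard three-point identity yields, for every $\u\in\P$,
\[
\langle \mathbf{g},\, \x^+ - \u\rangle \leq \frac{L}{2\eta}\left(\|\z-\u\|_2^2 - \|\x^+-\u\|_2^2 - \|\x^+-\z\|_2^2\right).
\]
First I would fix an arbitrary comparator $\u\in\P$ and use convexity to reduce the regret against $\u$ to $\sum_t \langle \nabla c_t(\x_t),\, \x_t-\u\rangle$, writing $\mathbf{g}_t = \nabla c_t(\x_t)$ (the gradient used to update $\z_t$) and $\widehat{\mathbf{g}}_t = \nabla c_{t-1}(\z_{t-1})$ (the surrogate gradient used to predict $\x_t$).

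Next I would split $\langle \mathbf{g}_t, \x_t-\u\rangle = \langle \mathbf{g}_t, \z_t-\u\rangle + \langle \mathbf{g}_t-\widehat{\mathbf{g}}_t, \x_t-\z_t\rangle + \langle \widehat{\mathbf{g}}_t, \x_t-\z_t\rangle$ and bound each piece: apply the prox inequality to the $\z_t$-update (anchor $\z_{t-1}$, comparator $\u$) for the first term, apply it to the $\x_t$-update (anchor $\z_{t-1}$, comparator $\z_t$) for the third term, and control the cross term by Young's inequality as $\langle \mathbf{g}_t-\widehat{\mathbf{g}}_t, \x_t-\z_t\rangle \le \frac{\eta}{2L}\|\mathbf{g}_t-\widehat{\mathbf{g}}_t\|_2^2 + \frac{L}{2\eta}\|\x_t-\z_t\|_2^2$. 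Summing the three bounds, the $\|\x_t-\z_t\|_2^2$ contributions cancel and so do the $\|\z_t-\z_{t-1}\|_2^2$ contributions, leaving the clean per-step inequality
\[
\langle \mathbf{g}_t, \x_t-\u\rangle \le \frac{\eta}{2L}\|\mathbf{g}_t-\widehat{\mathbf{g}}_t\|_2^2 - \frac{L}{2\eta}\|\x_t-\z_{t-1}\|_2^2 + \frac{L}{2\eta}\left(\|\z_{t-1}-\u\|_2^2 - \|\z_t-\u\|_2^2\right).
\]

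The crucial step --- and the one I expect to be the main obstacle --- is to exploit the surviving negative term $-\frac{L}{2\eta}\|\x_t-\z_{t-1}\|_2^2$ rather than discarding it. Because $\widehat{\mathbf{g}}_t$ is only a surrogate for $\nabla c_t(\z_{t-1})$, I would split $\mathbf{g}_t - \widehat{\mathbf{g}}_t = (\nabla c_t(\x_t)-\nabla c_t(\z_{t-1})) + (\nabla c_t(\z_{t-1})-\nabla c_{t-1}(\z_{t-1}))$ and apply $\|a+b\|_2^2\le 2\|a\|_2^2+2\|b\|_2^2$ with the $L$-smoothness~(\ref{eqn:smooth}) to get $\frac{\eta}{2L}\|\mathbf{g}_t-\widehat{\mathbf{g}}_t\|_2^2 \le \eta L\|\x_t-\z_{t-1}\|_2^2 + \frac{\eta}{L}\|\nabla c_t(\z_{t-1})-\nabla c_{t-1}(\z_{t-1})\|_2^2$. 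The first piece is then absorbed by the negative term precisely because the chosen step size satisfies $\eta\le 1/2 < 1/\sqrt{2}$, so that $\eta L - \frac{L}{2\eta}\le 0$; the second piece is exactly a summand of $\text{EVAR}^s_T$. This is where smoothness and the step-size restriction are genuinely needed, and it is the heart of why the prox method buys a variation bound where naive FTRL does not.

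Finally I would sum over $t=1,\dots,T$: the distance terms telescope to $\frac{L}{2\eta}(\|\z_0-\u\|_2^2 - \|\z_T-\u\|_2^2) \le \frac{L}{2\eta}\|\u\|_2^2 \le \frac{L}{2\eta}$ using $\z_0=\mathbf{0}$ and $\P\subseteq\mathcal B$, while the residual gradient-difference terms sum to $\frac{\eta}{L}\text{EVAR}^s_T$. This gives regret $\le \frac{L}{2\eta} + \frac{\eta}{L}\text{EVAR}^s_T$ for every comparator, hence for the minimizer. Plugging in $\eta = \frac{1}{2}\min\{1,\, L/\sqrt{\text{EVAR}^s_T}\}$ and checking the two cases $\sqrt{\text{EVAR}^s_T}\le L$ and $\sqrt{\text{EVAR}^s_T}> L$ yields the claimed bound $2\max(L,\sqrt{\text{EVAR}^s_T})$, with each case in fact delivering the slightly sharper constant $\frac{3}{2}$.
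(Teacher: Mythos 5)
Your proposal is correct and follows essentially the same route as the paper: the paper proves Theorem~\ref{thm:2} by invoking Lemma~\ref{lem:6} (Nemirovski's Lemma 3.1, proved in Appendix A), which is precisely the per-step prox inequality you derive by hand, and it then performs the identical smoothness split of $\nabla c_t(\x_t)-\nabla c_{t-1}(\z_{t-1})$, absorption of the $\|\x_t-\z_{t-1}\|_2^2$ term using $\eta\le 1/2$, telescoping against $\z_0=\mathbf{0}$, and step-size tuning. The only difference is that your inlined Euclidean derivation (three-point identity plus Young's inequality) carries the sharper coefficient $\frac{\eta}{2L}$ on the gradient-difference term where the paper's cited lemma gives $\frac{\eta}{L}$, which is why you end with the constant $\frac{3}{2}$ rather than the paper's exact $2$.
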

Compared to Theorem~\ref{thm:1}, the regret bound in Theorem~\ref{thm:2} is slightly worse by a factor of $2$.
To prove Theorem~\ref{thm:2}, we need the following lemma, which is the Lemma 3.1 in~\citep{Nemirovski2005} stated in our notations.

 \begin{lemma}[\textbf{Lemma 3.1}~\citep{Nemirovski2005}]\label{lem:6}
Let $\omega(\z)$ be a  $\alpha$-strongly convex function with respect to the norm $\|\cdot\|$, whose dual norm is denoted by $\|\cdot\|_*$,  and $D(\x,\z) = \omega(\x)- (\omega(\z) + (\x-\z)^{\top}\omega'(\z))$ be the Bregman distance induced by function $\omega(\x)$. Let $Z$ be a convex compact set, and $U\subseteq Z$ be convex and closed.  Let $\z\in Z$, $\gamma>0$, Consider the points,
 \begin{align}
 \x &= \arg\min_{\u\in U} \gamma\u^{\top}\xi + D(\u, \z)\label{eqn:project1},\\
 \z_+&=\arg\min_{\u\in U} \gamma\u^{\top}\zeta + D(\u,\z),\label{eqn:project2}
 \end{align}
 then for any $\u\in U$, we have
 \begin{align}\label{eqn:ineq}
 \gamma\zeta^{\top}(\x-\u)\leq  D(\u,\z) - D(\u, \z_+) + \frac{\gamma^2}{\alpha}\|\xi-\zeta\|_*^2 - \frac{\alpha}{2}[\|\x-\z\|^2 + \|\x-\z_+\|^2].
 \end{align}
 \end{lemma}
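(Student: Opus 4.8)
The plan is to reduce everything to (i) the first-order optimality conditions of the two prox problems and (ii) the three-point identity for the Bregman distance, and then to clean up the resulting cross term using strong convexity and a Young-type inequality. First I would record the variational inequalities that characterize the two minimizers. Since $D(\u,\z)=\omega(\u)-\omega(\z)-(\u-\z)^\top\omega'(\z)$ has $\u$-gradient $\omega'(\u)-\omega'(\z)$, optimality of $\x$ in~(\ref{eqn:project1}) and of $\z_+$ in~(\ref{eqn:project2}) over the convex set $U$ gives, for every $\u\in U$,
\[
(\u-\x)^\top\big(\gamma\xi+\omega'(\x)-\omega'(\z)\big)\ge 0, \qquad (\u-\z_+)^\top\big(\gamma\zeta+\omega'(\z_+)-\omega'(\z)\big)\ge 0.
\]

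Next I would introduce the elementary three-point identity $(r-q)^\top(\omega'(q)-\omega'(p))=D(r,p)-D(r,q)-D(q,p)$, valid for any $p,q,r$ and immediate from the definition of $D$. Applying the $\z_+$-inequality at the given $\u$ and rewriting the term $(\u-\z_+)^\top(\omega'(\z_+)-\omega'(\z))$ through the identity (with $p=\z,\,q=\z_+,\,r=\u$) yields $\gamma\zeta^\top(\z_+-\u)\le D(\u,\z)-D(\u,\z_+)-D(\z_+,\z)$. I would then split $\gamma\zeta^\top(\x-\u)=\gamma\zeta^\top(\z_+-\u)+\gamma\zeta^\top(\x-\z_+)$ and treat the second piece by feeding $\u=\z_+$ into the $\x$-inequality; writing $\gamma\zeta^\top(\x-\z_+)=\gamma\xi^\top(\x-\z_+)+\gamma(\zeta-\xi)^\top(\x-\z_+)$ and applying the identity a second time (with $p=\z,\,q=\x,\,r=\z_+$) gives $\gamma\zeta^\top(\x-\z_+)\le D(\z_+,\z)-D(\z_+,\x)-D(\x,\z)+\gamma(\zeta-\xi)^\top(\x-\z_+)$. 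Adding the two bounds makes the two copies of $D(\z_+,\z)$ cancel, leaving
\[
\gamma\zeta^\top(\x-\u)\le D(\u,\z)-D(\u,\z_+)-D(\z_+,\x)-D(\x,\z)+\gamma(\zeta-\xi)^\top(\x-\z_+).
\]

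Finally I would convert the two nonpositive Bregman terms into squared norms via $\alpha$-strong convexity, $D(\z_+,\x)\ge\frac{\alpha}{2}\|\x-\z_+\|^2$ and $D(\x,\z)\ge\frac{\alpha}{2}\|\x-\z\|^2$, which supplies the two negative quadratics appearing in~(\ref{eqn:ineq}), and bound the residual inner product by the generalized Cauchy--Schwarz inequality followed by Young's inequality, $\gamma(\zeta-\xi)^\top(\x-\z_+)\le\gamma\|\xi-\zeta\|_*\|\x-\z_+\|\le\frac{\gamma^2}{\alpha}\|\xi-\zeta\|_*^2+\frac{\alpha}{4}\|\x-\z_+\|^2$, to produce the dual-norm term. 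I expect the only real obstacle to be the bookkeeping of constants in this last step: the primal contribution generated by the Young split must be reabsorbed into the strong-convexity slack coming from $D(\z_+,\x)$, so matching the stated coefficients $\frac{\gamma^2}{\alpha}$ and $\frac{\alpha}{2}$ \emph{simultaneously} forces a careful choice of the Young parameter (and, if one wants to retain the full $-\frac{\alpha}{2}\|\x-\z_+\|^2$, keeping part of $D(\z_+,\x)$ rather than discarding it). One must also apply the three-point identity with the correct assignment of $p,q,r$ each time so that the two $D(\z_+,\z)$ terms cancel rather than accumulate; beyond this, the argument is a direct chain of the two optimality inequalities.
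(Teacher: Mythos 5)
Your reduction to the two variational inequalities and the three-point identity is sound, and your intermediate inequality
\begin{align*}
\gamma\zeta^{\top}(\x-\u)\le D(\u,\z)-D(\u,\z_+)-D(\z_+,\x)-D(\x,\z)+\gamma(\zeta-\xi)^{\top}(\x-\z_+)
\end{align*}
is exactly what the paper's proof establishes (there by direct expansion of $D(\u,\z)-D(\u,\z_+)$ rather than via the three-point identity). The gap is in the last step, and it is the one you yourself flagged: handling the cross term by generalized Cauchy--Schwarz plus Young's inequality cannot reproduce the stated constants. Young gives $\gamma\|\xi-\zeta\|_*\|\x-\z_+\|\le\frac{\gamma^2}{2\epsilon}\|\xi-\zeta\|_*^2+\frac{\epsilon}{2}\|\x-\z_+\|^2$; to obtain the coefficient $\frac{\gamma^2}{\alpha}$ you are forced to take $\epsilon=\alpha/2$, which leaves $+\frac{\alpha}{4}\|\x-\z_+\|^2$ uncancelled and yields only the weaker bound with $-\frac{\alpha}{4}\|\x-\z_+\|^2$ in place of $-\frac{\alpha}{2}\|\x-\z_+\|^2$. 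No choice of the Young parameter repairs this, and ``keeping part of $D(\z_+,\x)$'' buys nothing: strong convexity gives exactly $D(\z_+,\x)\ge\frac{\alpha}{2}\|\x-\z_+\|^2$ and no more, and the inequality you would need, $-\frac{\alpha}{2}t^2+at\le\frac{a^2}{\alpha}-\frac{\alpha}{2}t^2$ with $t=\|\x-\z_+\|$ and $a=\gamma\|\xi-\zeta\|_*$, is simply false whenever $t>a/\alpha$; so no pointwise estimate in these two quantities alone can close the argument.

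The missing idea is an auxiliary proximity bound that the paper derives first: plug $\u=\z_+$ into the optimality condition for $\x$ \emph{and} $\u=\x$ into the optimality condition for $\z_+$ (you only do the first of these), and sum the two. Monotonicity of $\omega'$, i.e. strong convexity, then gives
\begin{align*}
\alpha\|\x-\z_+\|^2\le\gamma(\x-\z_+)^{\top}(\zeta-\xi)\le\gamma\|\xi-\zeta\|_*\|\x-\z_+\|,
\end{align*}
hence $\|\x-\z_+\|\le\frac{\gamma}{\alpha}\|\xi-\zeta\|_*$. With this in hand, the cross term is bounded directly, $\gamma(\zeta-\xi)^{\top}(\x-\z_+)\le\gamma\|\xi-\zeta\|_*\|\x-\z_+\|\le\frac{\gamma^2}{\alpha}\|\xi-\zeta\|_*^2$, at the cost of no quadratic term at all, and your derivation then produces the lemma with its exact constants. (Incidentally, in the paper's application in Theorem~\ref{thm:2} the term $-\frac{\alpha}{2}\|\x-\z_+\|^2$ is discarded anyway, so your weaker constant would suffice there; but it does not prove the lemma as stated.)
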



In order not to put readers in struggling with complex notations in~\citep{Nemirovski2005} for the proof of Lemma~\ref{lem:6}, we present a detailed proof  in Appendix A which is an adaption of the original proof to our notations. \\

\begin{proof}[of Theorem~\ref{thm:2}]
First, we note that the two updates in step 4 and step 6 of Algorithm~\ref{alg:3} fit in the Lemma~\ref{lem:6}  if we let $U=Z=\P$,  $\z=\z_{t-1}$, $\x=\x_t$, $\z_+=\z_t$,  and $\omega(\x)=\frac{1}{2}\|\x\|_2^2$, which is $1$-strongly convex function with respect to $\|\cdot\|_2$.  Then $D(\u,\z)=\frac{1}{2}\|\u-\z\|_2^2$. As a result, the two updates for $\x_t, \z_t$ in Algorithm~\ref{alg:3} are exactly the updates in~(\ref{eqn:project1}) and (\ref{eqn:project2}) with $\z=\z_{t-1}, \gamma=\eta/L$, $\xi=\nabla c_{t-1}(\z_{t-1})$, and $\zeta=\nabla c_t(\x_t)$.  Replacing these into~(\ref{eqn:ineq}), we have the following inequality, 
\begin{align*}
\frac{\eta}{L} (\x_t - \z)^{\top}\nabla c_t(\x_t) &\le
\frac{1}{2}\left(\|\z-\z_{t-1} \|_2^2 - \|\z-\z_{t} \|_2^2\right)\\
& \hspace*{0.1in}+
\frac{\eta^2}{L^2}\|\nabla c_{t}(\x_t) - \nabla c_{t-1}(\z_{t-1})\|_2^2 -
\frac{1}{2}\|\x_t - \z_{t-1}\|_2^2.
\end{align*}
Then we have
\begin{align*}
&\frac{\eta}{L}(c_t(\x_t) - c_t(\z))\leq \frac{\eta}{L} (\x_t - \z)^{\top}\nabla c_t(\x_t) \leq  \frac{1}{2}\left(\|\z-\z_{t-1} \|_2^2 - \|\z-\z_{t} \|_2^2\right)\\
&+ \frac{2\eta^2}{L^2} \|\nabla
c_{t}(\z_{t-1}) - \nabla c_{t-1}(\z_{t-1})\|_2^2+ \frac{2\eta^2}{L^2}\|\nabla
c_{t}(\x_t) - \nabla c_{t}(\z_{t-1})\|_2^2  - \frac{1}{2}\|\x_t - \z_{t-1}\|_2^2  \\
& \leq  \frac{1}{2}\left(\|\z-\z_{t-1}\|_2^2 - \|\z-\z_{t}\|_2^2\right)+ \frac{2\eta^2}{L^2}\|\nabla c_{t}(\z_{t-1}) - \nabla c_{t-1}(\z_{t-1})\|_2^2 + \underbrace{\left(2\eta^2 - \frac{1}{2}\right)\|\x_t - \z_{t-1}\|_2^2}\limits_{\leq 0\text{ due to $\eta\leq 1/2$}},
\end{align*}
where the first inequality follows the convexity of $c_t(\x)$, and the third inequality follows the smoothness  of $c_t(\x)$.  
By taking the summation over $t=1,\cdots, T$  with  $\z^*=\arg\min\limits_{\z\in\P}\sum_{t=1}^Tc_t(\z)$, and dividing both sides by $\eta/L$, 
we have
\begin{eqnarray*}
\sum_{t=1}^Tc_t(\x_t) - \min_{\x\in\P}\sum_{t=1}c_t(\x) & \leq & \frac{L}{2\eta}+ \frac{2\eta}{L}\sum_{t=0}^{T-1}\|\nabla c_{t+1}(\z_{t}) - \nabla c_{t}(\z_{t})\|_2^2.
\end{eqnarray*}
We complete the proof by plugging the value of $\eta$.
\end{proof}

\noindent \textbf{Remark:} Note that the prox method, together with Lemma~\ref{lem:6}  provides an easy way to generalize the framework based on Euclidean norm to a general norm. To be precise, let $\|\cdot\|$ denote a general norm, $\|\cdot\|_*$ denote its dual norm, $\omega(\z)$ be a  $\alpha$-strongly convex function with respect to the norm $\|\cdot\|$,  and $D(\x,\z) = \omega(\x)- (\omega(\z) + (\x-\z)^{\top}\omega'(\z))$ be the Bregman distance induced by function $\omega(\x)$. 
 Let $c_t(\cdot), t=1,\cdots, T$ be $L$-smooth functions with respect to norm $\|\cdot\|$, i.e.,
$\|\nabla c_t(\x) - \nabla c_t(\z)\|_* \leq L \|\x- \z\|$.
Correspondingly, we define the extended sequential variation based on the general norm as follows:
\begin{align}\label{eqn:ge}
\text{EVAR}^{gs}_T &= \sum_{t=0}^{T-1}\|\nabla c_{t+1}(\z_t) - \nabla c_t(\z_t)\|_*^2.
\end{align}

Algorithm~\ref{alg:3-1} gives the detailed steps for the general framework. 
We note that the key differences from Algorithm~\ref{alg:3} are:  $\z_0$ is set to $\min_{\z\in\P}\omega(\z)$, and the Euclidean distances in steps 4 and 6 are replaced by Bregman distances, i.e.,  
 \begin{align*}
           &\displaystyle \x_t = \mathop{\arg\min}\limits_{\x \in \P} \left\{\x^{\top}\nabla c_{t-1}(\z_{t-1})+
           \frac{L}{\eta}D(\x, \z_{t-1}) \right\},\\
           &\displaystyle \z_{t} =\mathop{\arg\min}\limits_{\x \in \P} \left\{\x^{\top}\nabla c_{t}(\x_{t})+ \frac{L}{\eta}D(\x, \z_{t-1})\right\}.
\end{align*}
The following theorem states the variation-based regret bound for the general norm  framework, where $R$ measure the size of $\P$ defined as $R = \sqrt{2(\max_{\x\in\P} \omega(\x) - \min_{\x\in\P}\omega(\x))}$. 
\begin{theorem} \label{thm:2-1}
Let $c_t(\cdot), t=1, \ldots, T$ be a sequence of convex
functions whose gradients are L-Lipschitz continuous, $\omega(\z)$ be a $\alpha$-strongly convex function, both with respect to norm $\|\cdot\|$, and $\text{EVAR}^{gs}_T$ be defined in~(\ref{eqn:ge}). By setting
$\eta = (1/2)\min\left\{\sqrt{\alpha}, LR/\sqrt{\text{EVAR}^{gs}_T} \right\}$, we have the
following regret bound
\[
\sum_{t=1}^Tc_t(\x_t)-\min_{\x\in\P}\sum_{t=1}^Tc_t(\x) \leq 2R\max\left(LR/\sqrt{\alpha}, \sqrt{\text{EVAR}^{gs}_T}\right).
\]
\end{theorem}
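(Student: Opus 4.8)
The plan is to replay the proof of Theorem~\ref{thm:2} almost verbatim, but to invoke Lemma~\ref{lem:6} in its full Bregman-distance generality rather than specializing $\omega$ to $\tfrac12\|\cdot\|_2^2$. First I would verify that the two general-norm updates for $\x_t$ and $\z_t$ displayed before the theorem are instances of~(\ref{eqn:project1}) and~(\ref{eqn:project2}) with $U=Z=\P$, $\z=\z_{t-1}$, $\x=\x_t$, $\z_+=\z_t$, $\gamma=\eta/L$, $\xi=\nabla c_{t-1}(\z_{t-1})$, and $\zeta=\nabla c_t(\x_t)$. Substituting these into~(\ref{eqn:ineq}) and using convexity of $c_t$ to replace $\zeta^{\top}(\x_t-\u)=\nabla c_t(\x_t)^{\top}(\x_t-\u)$ by $c_t(\x_t)-c_t(\u)$ gives, for every $\u\in\P$,
\[
\tfrac{\eta}{L}\bigl(c_t(\x_t)-c_t(\u)\bigr)\le D(\u,\z_{t-1})-D(\u,\z_t)+\tfrac{\eta^2}{L^2\alpha}\|\nabla c_{t-1}(\z_{t-1})-\nabla c_t(\x_t)\|_*^2-\tfrac{\alpha}{2}\|\x_t-\z_{t-1}\|^2,
\]
where the harmless $-\tfrac{\alpha}{2}\|\x_t-\z_t\|^2$ term has already been discarded.

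Next I would split the dual-norm term exactly as in Theorem~\ref{thm:2}: inserting $\nabla c_t(\z_{t-1})$ and applying $\|a+b\|_*^2\le 2\|a\|_*^2+2\|b\|_*^2$ bounds it by $2\|\nabla c_t(\z_{t-1})-\nabla c_{t-1}(\z_{t-1})\|_*^2+2\|\nabla c_t(\x_t)-\nabla c_t(\z_{t-1})\|_*^2$, and the $L$-smoothness $\|\nabla c_t(\x_t)-\nabla c_t(\z_{t-1})\|_*\le L\|\x_t-\z_{t-1}\|$ turns the second piece into $2L^2\|\x_t-\z_{t-1}\|^2$. The coefficient of $\|\x_t-\z_{t-1}\|^2$ then becomes $\tfrac{2\eta^2}{\alpha}-\tfrac{\alpha}{2}$, which is nonpositive under the prescribed step size $\eta\le\tfrac{\sqrt\alpha}{2}$ (the direct analogue of the $\eta\le 1/2$ step used in Theorem~\ref{thm:2}, to which it reduces when $\alpha=1$), so that piece can be dropped. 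This leaves the clean per-trial inequality $\tfrac{\eta}{L}(c_t(\x_t)-c_t(\u))\le D(\u,\z_{t-1})-D(\u,\z_t)+\tfrac{2\eta^2}{L^2\alpha}\|\nabla c_t(\z_{t-1})-\nabla c_{t-1}(\z_{t-1})\|_*^2$.

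I would then sum over $t=1,\dots,T$ with $\u=\z^*=\arg\min_{\z\in\P}\sum_t c_t(\z)$. The Bregman terms telescope to $D(\z^*,\z_0)-D(\z^*,\z_T)\le D(\z^*,\z_0)$ using $D(\z^*,\z_T)\ge 0$, and the residual gradient sum is exactly $\text{EVAR}^{gs}_T$ from~(\ref{eqn:ge}). Multiplying by $L/\eta$ yields
\[
\sum_{t=1}^T c_t(\x_t)-\min_{\x\in\P}\sum_{t=1}^T c_t(\x)\le \tfrac{L}{\eta}D(\z^*,\z_0)+\tfrac{2\eta}{L\alpha}\,\text{EVAR}^{gs}_T .
\]
The one genuinely new ingredient relative to the Euclidean proof is the bound $D(\z^*,\z_0)\le R^2/2$: since $\z_0=\arg\min_{\z\in\P}\omega(\z)$, first-order optimality gives $\omega'(\z_0)^{\top}(\z^*-\z_0)\ge 0$, hence $D(\z^*,\z_0)=\omega(\z^*)-\omega(\z_0)-\omega'(\z_0)^{\top}(\z^*-\z_0)\le \max_{\x\in\P}\omega(\x)-\min_{\x\in\P}\omega(\x)=R^2/2$. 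Plugging this in and substituting $\eta=\tfrac12\min\{\sqrt\alpha,\,LR/\sqrt{\text{EVAR}^{gs}_T}\}$, then splitting according to which argument attains the $\min$, balances the two terms and produces the stated bound $2R\max(LR/\sqrt\alpha,\sqrt{\text{EVAR}^{gs}_T})$.

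The argument is essentially mechanical once Lemma~\ref{lem:6} is available, so I expect no conceptual obstacle. The only points demanding care are (i) threading the constants $\alpha$ and $L$ through the dual-norm split so that the $\|\x_t-\z_{t-1}\|^2$ coefficient $\tfrac{2\eta^2}{\alpha}-\tfrac{\alpha}{2}$ is indeed nonpositive under the chosen $\eta$, and (ii) the optimality argument that controls the initial Bregman gap $D(\z^*,\z_0)$ by $R^2/2$ --- this is precisely what replaces the trivial estimate $\|\z^*\|_2^2\le 1$ used in the proof of Theorem~\ref{thm:2}.
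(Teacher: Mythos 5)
Your route is exactly the one the paper intends: the paper gives no actual proof of Theorem~\ref{thm:2-1} (it only remarks that the proof is ``similar to that of Theorem~\ref{thm:2}''), and your skeleton --- apply Lemma~\ref{lem:6} with $\gamma=\eta/L$, $\xi=\nabla c_{t-1}(\z_{t-1})$, $\zeta=\nabla c_t(\x_t)$, split the dual-norm term by smoothness, telescope the Bregman distances, and control the initial gap via first-order optimality of $\z_0=\arg\min_{\z\in\P}\omega(\z)$ so that $D(\z^*,\z_0)\le \max_{\x\in\P}\omega(\x)-\min_{\x\in\P}\omega(\x)=R^2/2$ --- is the right one; that last step is correct and is indeed the only genuinely new ingredient relative to Theorem~\ref{thm:2}.

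There is, however, a genuine gap in how the strong-convexity parameter $\alpha$ is threaded through. You claim the coefficient $\frac{2\eta^2}{\alpha}-\frac{\alpha}{2}$ of $\|\x_t-\z_{t-1}\|^2$ is nonpositive under the prescribed $\eta\le\sqrt{\alpha}/2$. Nonpositivity is equivalent to $\eta\le\alpha/2$, so your claim holds only when $\alpha\ge 1$; for $\alpha<1$ the prescribed step size can violate it (e.g.\ $\alpha=1/4$, $\eta=\sqrt{\alpha}/2=1/4$ gives coefficient $1/2-1/8=3/8>0$), and the term cannot be discarded. The same $\alpha$-leakage reappears at the end: even granting the per-trial inequality, your summed bound is
\begin{align*}
\sum_{t=1}^T c_t(\x_t)-\min_{\x\in\P}\sum_{t=1}^T c_t(\x)\le \frac{LR^2}{2\eta}+\frac{2\eta}{L\alpha}\,\text{EVAR}^{gs}_T,
\end{align*}
and plugging $\eta=\frac12\min\{\sqrt{\alpha},LR/\sqrt{\text{EVAR}^{gs}_T}\}$ gives, in the large-variation case, $R\sqrt{\text{EVAR}^{gs}_T}\,(1+1/\alpha)$, which exceeds the claimed $2R\sqrt{\text{EVAR}^{gs}_T}$ whenever $\alpha<1$. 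So your proof is valid exactly when $\alpha\ge1$. For general $\alpha$ one must either normalize $\omega$ (replace it by $\omega/\alpha$) or take $\eta=\frac12\min\{\alpha,\ \sqrt{\alpha}LR/\sqrt{\text{EVAR}^{gs}_T}\}$, which yields the bound $\frac{2R}{\sqrt{\alpha}}\max\bigl(LR/\sqrt{\alpha},\ \sqrt{\text{EVAR}^{gs}_T}\bigr)$ --- weaker than the theorem's statement by a factor $1/\sqrt{\alpha}$ when $\alpha<1$. This scaling defect is arguably present in the (unproven) theorem statement itself, but a correct writeup must either flag that or restrict to $\alpha\ge1$, rather than assert the false sign condition.
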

We skip the proof since it is similar to that of Theorem~\ref{thm:2}.


\begin{algorithm}[t]
\center \caption{General Prox Method for
Online Convex Optimization}
\begin{algorithmic}[1] \label{alg:3-1}
    \STATE {\bf Input}: $\eta >0, \omega(\z)$

    \STATE {\bf Initialization}: $\z_0 = \min_{\z\in\P}\omega(\z)$ and $c_0(\x) =
    0$

    \FOR{$t = 1, \ldots, T$}
        \STATE Predict $\x_t$ by
        $
           \displaystyle \x_t = \mathop{\arg\min}\limits_{\x \in \P} \left\{\x^{\top}\nabla c_{t-1}(\z_{t-1})+
           \frac{L}{\eta}D(\x, \z_{t-1})
            \right\}
        $
        \STATE Receive a cost function $c_t(\cdot)$ and incur a loss $c_t(\x_t)$
       \STATE Update $\z_t$ by
        $
           \displaystyle \z_{t} =\mathop{\arg\min}\limits_{\x \in \P} \left\{\x^{\top}\nabla c_{t}(\x_{t})+ \frac{L}{\eta}D(\x, \z_{t-1})\right\}
        $
    \ENDFOR
\end{algorithmic}
\end{algorithm}

\section{Conclusions and Open Problems}\label{sec:conc}
In this paper, we proposed two algorithms for online convex optimization that bound the regret by the variation of cost functions. The first algorithm is an improvement of FTRL algorithm, and the second algorithm is based on the prox method.

One open problem is how to extend the proposed algorithms to the case where the learner only receives partial feedback about the cost functions. One common scenario of partial feedback is that the learner only receives the cost $c(\x_t)$ at the predicted point $\x_t$ but without observing the entire cost function $c_t(\x)$. This setup is usually referred as  bandit setting, and the related online learning problem is called online bandit convex optimization. Many algorithms have been proposed for online bandit convex optimization with regret bounds stated in number of trials  \citep{flaxman-2005-online, Awerbuch:2004:ARE:1007352.1007367,DBLP:conf/soda/DaniH06, DBLP:conf/colt/AbernethyHR08}. In~\citep{DBLP:conf/soda/HazanK09}, the authors extended the FTRL algorithm to online bandit \textit{linear} optimization and obtained a variation-based regret bound of $O(poly(d)\sqrt{\text{VAR}_T\log(T)}+poly(d\log(T)))$, where $\text{VAR}_T$ is the total variation of the cost vectors. The open question is how to develop algorithms for general  online bandit convex optimization with a variation-based regret bound. Directly extending the proposed algorithms to the bandit setting may be difficult because they need to keep track of and update two sets of solutions $\x_{1:T}$ and $\z_{1:T}$, and therefore it is insufficient to query each cost function only once. One possibility is to explore the multi-point bandit setting proposed in~\citep{DBLP:conf/colt/AgarwalDX10}, where multiple points can be queried for each cost function. In Appendix B, we extend the prox method to the multi-point bandit setting using $O(d)$ queries, and prove a variation-based regret bound which is optimal when the variation of cost functions is independent from $T$. It remains as an open problem how to achieve a variation-based regret bound with a constant number of queries independent from the dimension $d$. Another open problem for the
future work is how to reduce the dependence on $T$ in the regret bound for online bandit convex
optimization.








\bibliography{online-convex}

\section*{Appendix A: Proof of Lemma \ref{lem:6}}
By using the definition of Bregman distance $D(\u, \z)$, we can write equations~(\ref{eqn:project1}) and~(\ref{eqn:project2}) as
\begin{align*}
 \x &= \arg\min_{\u\in U} \u^{\top}(\gamma\xi - \omega'(\z)) + \omega(\u),\\
 \z_+&=\arg\min_{\u\in U} \u^{\top}(\gamma\zeta-\omega'(\z)) + \omega(\u),
 \end{align*}
 by the first oder optimality  condition, we have
 \begin{align}
& (\u-\x)^{\top} (\gamma\xi - \omega'(\z) + \omega'(\x))\geq 0, \forall \u\in U,\label{eqn:b1}\\
&(\u-\z_+)^{\top} (\gamma\zeta- \omega'(\z) + \omega'(\z_+))\geq 0, \forall \u\in U.\label{eqn:b2}
 \end{align}
 Applying~(\ref{eqn:b1}) with $\u= \z_+$ and~(\ref{eqn:b2}) with $\u= \x$, we get
 \begin{align*}
 & \gamma(\x-\z_+)^{\top}\xi\leq  (\omega'(\z) - \omega'(\x))^{\top}(\x-\z_+),\\
&\gamma(\z_+-\x)^{\top} \zeta\leq (\omega'(\z) - \omega'(\z_+))^{\top}(\z_+-\x).
 \end{align*}
 Summing up the two inequalities, we have
 \begin{align*}
 \gamma (\x-\z_+)^{\top}(\xi-\zeta) \leq (\omega'(\z_+) - \omega'(\x))^{\top}(\x-\z_+).
 \end{align*}
 Then
 \begin{align}
 \gamma \|\xi-\zeta\|_*\|\x-\z_+\|&\geq   -\gamma (\x-\z_+)^{\top}(\xi-\zeta)\geq (\omega'(\z_+) - \omega'(\x))^{\top}(\z_+-\x)\nonumber\\
 &\geq\alpha\|\z_+-\x\|^2.\label{eqn:b3}
 \end{align}
 where in the last inequality, we use the strong convexity of $\omega(\x)$.

 \begin{align*}
D(\u, \z)&- D(\u, \z_+)=\omega(\z_+)-\omega(\z) +(\u-\z_+)^{\top}  \omega'(\z_+) -(\u-\z)^{\top}\omega'(\z) \\
= &\omega(\z_+)-\omega(\z) +(\u-\z_+)^{\top}\omega'(\z_+) -( \u - \z_+)^{\top}\omega'(\z) -  (\z_+ -\z)^{\top} \omega'(\z) \\
= & \omega(\z_+) -\omega(\z)-(\z_+ -\z )^{\top} \omega'(\z) +(\u-\z_+)^{\top} (\omega'(\z_+)- \omega'(\z))\\
=& \omega(\z_+) -\omega(\z)  -( \z_+ -\z )^{\top} \omega'(\z) + (\u-\z_+)^{\top}(\gamma\zeta+   \omega'(\z_+)- \omega'(\z)) - (\u-\z_+)^{\top} \gamma\zeta\\
\geq & \omega(\z_+) -\omega(\z)  -  ( \z_+ -\z)^{\top}  \omega'(\z)  - (\u-\z_+)^{\top}\gamma \zeta \\
=& \underbrace{\omega(\z_+)  -\omega(\z) - (\z_+ -\z)^{\top}  \omega'(\z)  -  ( \x-\z_+)^{\top} \gamma\zeta}\limits_{\epsilon} + (\x -\u)^{\top}\gamma\zeta,
\end{align*}
where the inequality follows from~(\ref{eqn:b2}). We proceed by bounding $\epsilon$ as:
 \begin{align*}
 \epsilon =& \omega(\z_+)   -\omega(\z)- (\z_+ -\z ) ^{\top}\omega'(\z) - (\x-\z_+)^{\top}\gamma\zeta\\
=& \omega(\z_+)  -\omega(\z) - ( \z_+ -\z )^{\top} \omega'(\z) - (\x-\z_+)^{\top}\gamma ( \zeta - \xi)-  ( \x-\z_+)^{\top}\gamma \xi \\
=&\omega(\z_+)  -\omega(\z) - ( \z_+ -\z )^{\top} \omega'(\z) - (\x-\z_+)^{\top}\gamma ( \zeta - \xi)\\
& + (\z_+ - \x)^{\top}(\gamma  \xi - \omega'(\z) + \omega'(\x) )- (\z_+- \x)^{\top}(\omega'(\x)-\omega'(\z) )\\
\geq & \omega(\z_+)  -\omega(\z) - ( \z_+ -\z )^{\top} \omega'(\z) - (\x-\z_+)^{\top}\gamma ( \zeta - \xi)-( \z_+- \x)^{\top} (\omega'(\x)- \omega'(\z))\\
= & \omega(\z_+) -\omega(\z) - (\x -\z)^{\top} \omega'(\z) -(\x-\z_+)^{\top}\gamma (\zeta - \xi) - ( \z_+- \x)^{\top} \omega'(\x) \\
= &\left[ \omega(\z_+)- \omega(\x) - (\z_+- \x)^{\top}\omega'(\x) \right]+\left[\omega(\x) -\omega(\z) -(\x -\z )^{\top} \omega'(\z) \right]- (\x-\z_+,)^{\top}\gamma ( \zeta- \xi)\\
\geq & \frac{\alpha}{2} \| \x - \z_+\|^2 + \frac{\alpha}{2}\|\x - \z \|^2 - \gamma \| \x-\z_+ \| \| \zeta - \xi \|_*\\
\geq & \frac{\alpha}{2}\{ \| \x-\z_+ \|^2 + \|\x - \z \|^2\} -\frac{\gamma^2}{\alpha} \|\zeta - \xi \|_*^2,
\end{align*}
where the first inequality follows from~(\ref{eqn:b1}), the second inequality follows from the strong convexity of $\omega(\x)$, and the last inequality follows from~(\ref{eqn:b3}). Combining the above results, we have
\begin{align*}
\gamma(\x -\u)^{\top}\zeta \leq D(\u, \z)&- D(\u, \z_+) + \frac{\gamma^2}{\alpha} \|\zeta - \xi \|_*^2 - \frac{\alpha}{2} \{\| \x-\z_+  \|^2 + \|\x - \z \|^2\} .
\end{align*}

\section*{Appendix B: A Randomized Algorithm for Online Bandit Convex Optimization}
In this appendix, we present a randomized algorithm for online bandit convex optimization with a variation-based regret bound.  Besides the smoothness assumption of the cost functions, and the boundness assumption about the domain $\P\subseteq\mathcal B$,  we further assume that (i) there exists $r\leq 1$ such that $
r\mathcal B\subseteq \P$, and (ii) the cost function themselves are Lipschitz continuous, i.e., there exists a constant $G$ such that
$|c_t(\x)-c_t(\z)|\leq G\|\x-\z\|_2, \forall \x, \z\in\P, \forall t$. 
To present the algorithm, we introduce a few notations. Let $i_t$ denote a random index in $\{1,\cdots, d\}$, and  
\begin{align*}
g_{t-1}(\z_{t-1})&= \frac{1}{\delta}\sum_{i=1}^d \left(c_{t-1}(\z_{t-1}+\delta \e_i)- c_{t-1}(\z_{t-1})\right)\e_i\\
\widehat g_t(\x_t, \e_{i_t})&=\frac{d}{\delta} (c_t(\x_t+\delta \e_{i_t})-c_t(\x_t))\e_{i_t}\\
\tilde g_t(\x_t, \e_{i_t})&= \widehat g_t(\x_t, \e_{i_t}) + g_{t-1}(\z_{t-1}) - \widehat g_{t-1}(\z_{t-1}, \e_{i_t})
\end{align*} 
The detailed steps are  shown in Algorithm~\ref{alg:5}.  
We use notation $\tilde g_t(\x_t) = \tilde g_t(\x_t, \e_{i_t})$ for short. It can be shown that $\E_{t}[\tilde g_t(\x_t)]=\E_{t}[\widehat g_t(\x_t,\e_{i_t})]$. The reason to use $\tilde g_t(\x_t)$ rather than $\widehat g_t(\x_t, \e_{i_t})$  in updating $\z_t$ is to cancel $g_{t-1}(\z_{t-1})$ in updating $\x_t$.  To prove the regret bound, we define another variation of cost functions by
\begin{align}
\text{EVAR}^{cs}_T=\sum_{t=0}^{T-1} \max_{\x\in\P}|c_{t+1}(\x)-c_t(\x)| \label{eqn:var-2}
\end{align}
Unlike the variation defined in~(\ref{eqn:vard}) that uses the gradient of the cost functions, the variation in (\ref{eqn:var-2}) is defined according to the values of cost functions. The reason why we bound the regret of Algorithm~\ref{alg:5} by the variation defined in~(\ref{eqn:var-2})  by the values of the cost functions rather than the one defined in~(\ref{eqn:vard}) by the gradient of the cost functions is that in the bandit setting, we only have point evaluations of the cost functions. The following theorem states the regret bound for Algorithm~\ref{alg:5}. 

\begin{algorithm}[t]
\center \caption{Randomized Online Bandit Convex Optimization}
\begin{algorithmic}[1] \label{alg:5}
    \STATE {\bf Input}: $\eta$, $\alpha$, $\delta>0$

    \STATE {\bf Initialization}: $\z_0 = \mathbf{0}$ and $c_0(\x) =
    0$
    \FOR{$t = 1, \ldots, T$}
        \STATE Compute $\x_t$ by
        $
            \x_t = \mathop{\arg\min}\limits_{\x \in(1-\alpha) \P} \left\{\x^{\top}g_{t-1}(\z_{t-1})+
            \frac{G}{2\eta}\|\x - \z_{t-1}\|_2^2
            \right\}
        $
        \STATE Random sample $i_t\in\{1,\cdots, d\}$.
        \STATE Observe $c_t(\x_t), c_t(\x_t+\delta\e_{i_t})$
       \STATE Update $\z_t$ by
        $
           \z_{t} = \mathop{\arg\min}\limits_{\x \in (1-\alpha)\P} \left\{\x^{\top}\tilde g_t(\x_t)+\frac{G}{2\eta}\|\x-\z_{t-1}\|_2^2 \right\}
       $
        \STATE Observe $c_t(\z_t), c_t(\z_t+\delta\e_i),i=1,\cdots, d$
    \ENDFOR
\end{algorithmic}
\end{algorithm}
\begin{theorem} \label{thm:4}
Let $c_t(\cdot), t=1, \ldots, T$ be a sequence of $G$-Lipschitz continuous  convex
functions with $L$-Lipschitz continuous gradients. By setting $\displaystyle\delta =\sqrt{\frac{4d\max(G,\sqrt{\text{EVAR}^{cs}_T})}{(dL+G(1+1/r))T}}$, 
$\displaystyle\eta =\frac{\delta}{4d}\min\left\{1, \frac{G}{\sqrt{\text{EVAR}^{cs}_T}} \right\}$, and $\displaystyle\alpha=\frac{\delta}{r}$, we have the
regret bound for
Algorithm~\ref{alg:5} by
\begin{align*}
\E\left[\sum_{t=1}^T\frac{1}{2}(c_t(\x_t) +c_t(\x_t+\delta\e_{i_t}))\right] &-  \min\limits_{\x
\in \P} \sum_{t=1}^T c_t(\x)\\
& \leq 4\sqrt{\max\left(G, \sqrt{\text{EVAR}^{cs}_T}\right)d\left(dL+G(1+1/r)\right)T}
\end{align*}
\end{theorem}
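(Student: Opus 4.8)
The plan is to follow the template of the proof of Theorem~\ref{thm:2}, applying Lemma~\ref{lem:6} to the two gradient-mapping updates of Algorithm~\ref{alg:5} and then passing to expectations to absorb the stochasticity of the finite-difference estimators. First I would make exactly the identifications of Theorem~\ref{thm:2}: $\omega(\x)=\frac12\|\x\|_2^2$, $U=Z=(1-\alpha)\P$, $\z=\z_{t-1}$, $\x=\x_t$, $\z_+=\z_t$, $\gamma=\eta/G$, $\xi=g_{t-1}(\z_{t-1})$ and $\zeta=\tilde g_t(\x_t)$, so that steps~4 and~7 coincide with~(\ref{eqn:project1}) and~(\ref{eqn:project2}). (Note $g_{t-1}(\z_{t-1})$ is available deterministically because step~8 of the previous round queried all $d$ coordinates at $\z_{t-1}$.) Lemma~\ref{lem:6} then gives, for any \emph{fixed} $\u\in(1-\alpha)\P$,
\begin{align*}
\frac{\eta}{G}\tilde g_t(\x_t)^{\top}(\x_t-\u) &\leq \frac12\left(\|\u-\z_{t-1}\|_2^2-\|\u-\z_t\|_2^2\right) + \frac{\eta^2}{G^2}\|g_{t-1}(\z_{t-1})-\tilde g_t(\x_t)\|_2^2 \\
&\quad - \frac12\left(\|\x_t-\z_{t-1}\|_2^2+\|\x_t-\z_t\|_2^2\right).
\end{align*}

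The central device is the unbiasedness $\E_t[\tilde g_t(\x_t)]=g_t(\x_t)$, where $\E_t$ is expectation over $i_t$ conditioned on the history, $g_t(\cdot)$ is the full coordinate-wise finite difference, and $\x_t,\z_{t-1}$ are determined before $i_t$ is drawn. Because the comparator $\u=(1-\alpha)\x^*$ is deterministic, taking $\E_t$ of the display converts its left side to $\frac{\eta}{G}g_t(\x_t)^{\top}(\x_t-\u)$. To relate this to cost I would write $g_t(\x_t)=\nabla c_t(\x_t)+(g_t(\x_t)-\nabla c_t(\x_t))$: convexity gives $\nabla c_t(\x_t)^{\top}(\x_t-\u)\geq c_t(\x_t)-c_t(\u)$, while $L$-smoothness bounds each coordinate of the finite-difference error by $L\delta/2$, so $\|g_t(\x_t)-\nabla c_t(\x_t)\|_2\leq\tfrac{L\delta}{2}\sqrt d$ and the cross term costs at most $L\delta\sqrt d$ per round using $\|\x_t-\u\|_2\leq 2$.

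The main obstacle is controlling the expected squared deviation $\E_t\|g_{t-1}(\z_{t-1})-\tilde g_t(\x_t)\|_2^2$ by the sequential variation $\text{EVAR}^{cs}_T$. I would first invoke the definition of $\tilde g_t$ to obtain the cancellation $g_{t-1}(\z_{t-1})-\tilde g_t(\x_t)=\widehat g_{t-1}(\z_{t-1},\e_{i_t})-\widehat g_t(\x_t,\e_{i_t})$ --- this is precisely why the control variate is inserted into the $\z_t$-update. Writing $D_t^i(\mathbf{y})=c_t(\mathbf{y}+\delta\e_i)-c_t(\mathbf{y})$, each coordinate of this difference is $\tfrac{d}{\delta}\bigl(D_t^i(\x_t)-D_{t-1}^i(\z_{t-1})\bigr)$, which I would split as $[D_t^i(\x_t)-D_t^i(\z_{t-1})]+[D_t^i(\z_{t-1})-D_{t-1}^i(\z_{t-1})]$. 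The first bracket is a spatial term: integrating $\partial_i c_t$ along $\e_i$ and using $L$-smoothness bounds it by $\delta L\|\x_t-\z_{t-1}\|_2$, producing a multiple of $\|\x_t-\z_{t-1}\|_2^2$ that will be absorbed by the negative term from Lemma~\ref{lem:6}. The second is a temporal term bounded two ways, by $2G\delta$ (Lipschitz) and by $2\max_{\x}|c_t(\x)-c_{t-1}(\x)|$, so that its square is at most the product $4G\delta\max_{\x}|c_t(\x)-c_{t-1}(\x)|$; this is the trick that makes the variation enter \emph{linearly}, matching the definition of $\text{EVAR}^{cs}_T$ as a sum of absolute differences rather than squares. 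Summing over $i$ and multiplying by $\tfrac{d}{\delta^2}$ yields $\E_t\|g_{t-1}(\z_{t-1})-\tilde g_t(\x_t)\|_2^2\leq 2d^2L^2\|\x_t-\z_{t-1}\|_2^2+\tfrac{8d^2G}{\delta}\max_{\x}|c_t(\x)-c_{t-1}(\x)|$.

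Finally I would sum over $t$, telescope $\sum_t(\|\u-\z_{t-1}\|_2^2-\|\u-\z_t\|_2^2)\leq\|\u\|_2^2\leq1$, and choose $\eta$ small enough (guaranteed by the stated value) that $\tfrac{2\eta^2d^2L^2}{G^2}\leq\tfrac12$, so the spatial terms cancel against $-\tfrac12\|\x_t-\z_{t-1}\|_2^2$. Dividing by $\eta/G$ bounds $\E\sum_t(c_t(\x_t)-c_t(\u))$ by $\tfrac{G}{2\eta}+TL\delta\sqrt d+\tfrac{8\eta d^2}{\delta}\text{EVAR}^{cs}_T$. Two book-keeping steps remain: taking $\u=(1-\alpha)\x^*$ with $\x^*=\arg\min_{\x\in\P}\sum_t c_t(\x)$ and using $G$-Lipschitz continuity together with $r\mathcal B\subseteq\P$ and $\alpha=\delta/r$ to pay $G\alpha T=G\delta T/r$ for the domain shrinkage; and bounding $\tfrac12(c_t(\x_t)+c_t(\x_t+\delta\e_{i_t}))\leq c_t(\x_t)+\tfrac{G\delta}{2}$ for the two-point evaluation. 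Each extra term is $O(\delta T)$, and plugging in the stated $\delta$ and $\eta$ --- chosen precisely to balance the $1/\eta\propto1/\delta$ term against the $\propto\delta T$ terms --- collapses everything to $4\sqrt{\max(G,\sqrt{\text{EVAR}^{cs}_T})\,d\,(dL+G(1+1/r))T}$.
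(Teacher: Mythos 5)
Your proposal follows the paper's own route: apply Lemma~\ref{lem:6} to the two prox updates of Algorithm~\ref{alg:5} with $\gamma=\eta/G$, $\xi=g_{t-1}(\z_{t-1})$, $\zeta=\tilde g_t(\x_t)$; exploit the control-variate cancellation $\tilde g_t(\x_t)-g_{t-1}(\z_{t-1})=\widehat g_t(\x_t,\e_{i_t})-\widehat g_{t-1}(\z_{t-1},\e_{i_t})$ and split it into a spatial and a temporal difference; absorb the spatial part into $-\frac{1}{2}\|\x_t-\z_{t-1}\|_2^2$; pay $O(\delta T)$ for the gradient bias, the two-point evaluation, and the shrinkage to $(1-\alpha)\P$; then balance $\delta$ and $\eta$. (The paper phrases the argument through the surrogate $h_t(\x)=c_t(\x)+(\tilde g_t(\x_t)-\nabla c_t(\x_t))^{\top}\x$ rather than taking $\E_t$ of the prox inequality directly, but that difference is cosmetic.)

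The genuine gap is in your spatial bound. Writing $D_t^i(\y)=c_t(\y+\delta\e_i)-c_t(\y)$, you control $D_t^i(\x_t)-D_t^i(\z_{t-1})$ by $\delta L\|\x_t-\z_{t-1}\|_2$ via smoothness, so absorbing the resulting $\frac{2\eta^2d^2L^2}{G^2}\|\x_t-\z_{t-1}\|_2^2$ needs $\eta\le G/(2dL)$. You assert this is ``guaranteed by the stated value,'' but the stated $\eta$ only guarantees $\eta\le\delta/(4d)$, so your condition amounts to $\delta\le 2G/L$, which does not follow from the stated $\delta$: for instance with $d=1$, $r=1$, $T=1$, small variation and $L=100G$, the stated $\delta$ violates it (and $L\gg G$ is perfectly possible for $G$-Lipschitz, $L$-smooth convex functions, e.g.\ a quadratic of curvature $L$ capped to slope $G$). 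The paper avoids this entirely by bounding the spatial part with the $G$-Lipschitz continuity of $c_t$ itself, $|D_t^i(\x_t)-D_t^i(\z_{t-1})|\le 2G\|\x_t-\z_{t-1}\|_2$, giving the coefficient $\frac{8\eta^2 d^2}{\delta^2}$, which is absorbed exactly when $\eta\le\delta/(4d)$ --- precisely what the stated step size provides with no extra condition. A second, related hole: your product trick on the temporal part (square $\le 4G\delta\max_{\x}|c_t(\x)-c_{t-1}(\x)|$) is in fact more careful than the paper --- whose display carries $\max_{\x}|c_t(\x)-c_{t-1}(\x)|^2$ yet sums it into the linearly defined $\text{EVAR}^{cs}_T$ of (\ref{eqn:var-2}) --- but it turns the variation term into $\frac{8\eta d^2}{\delta}\text{EVAR}^{cs}_T=2d\min\left(\text{EVAR}^{cs}_T,\, G\sqrt{\text{EVAR}^{cs}_T}\right)$, and folding that into the claimed bound $4\sqrt{\max(G,\sqrt{\text{EVAR}^{cs}_T})\,d\,(dL+G(1+1/r))T}$ requires yet another condition you never verify (roughly $\delta\le 1/G$). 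To close the argument you must either prove these extra restrictions on $\delta$ hold (or show the claim is vacuous when they fail), or switch the spatial estimate to the paper's $G$-based one so that the stated parameters suffice as given.
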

\textbf{Remark:} Similar to the regret bound in~(\citet{DBLP:conf/colt/AgarwalDX10}, Theorem 9), Algorithm~\ref{alg:5} also gives the optimal regret bound $O(\sqrt{T})$ when the variation is independent of the number of trials. Our regret bound has a better dependence on the dimension $d$ (i.e., $d$) compared with the regret bound in~\citep{DBLP:conf/colt/AgarwalDX10} (i.e., $d^2$).
\begin{proof}
Let $h_t(\x)=c_t(\x) + (\tilde g_t(\x_t)-\nabla c_t(\x_t))^{\top}\x$.  It is easy seen that $\nabla h_t(\x_t)=\tilde g_t(\x_t)$.
Followed by Lemma \ref{lem:6}, we have for any  $\z\in(1-\alpha)\P$
\begin{align*}
&\frac{\eta}{G}\nabla h_t(\x_t)^{\top}(\x_t - \z) \leq
\frac{1}{2}\left(\|\z-\z_{t-1}\|_2^2 - \|\z-\z_{t}\|_2^2 \right) +
\frac{\eta^2}{G^2}\|\tilde g_t(\x_t) - g_{t-1}(\z_{t-1})\|_2^2
- \frac{1}{2}\|\x_t - \z_{t-1}\|_2^2 \\
& =  \frac{1}{2}\left(\|\z-\z_{t-1}\|_2^2 - \|\z-\z_{t}\|_2^2 \right) - \frac{1}{2}\|\x_t - \z_t\|_2^2 \\
&+ \frac{\eta^2}{G^2}\|\widehat g_t(\x_t,\e_{i_t}) - \widehat g_t(\z_{t-1},\e_{i_t})  + \widehat g_{t}(\z_{t-1},\e_{i_t}) -\widehat g_{t-1}(\z_{t-1}, \e_{i_t})\|_2^2
\end{align*}
By expanding the last term using the definitions of $\widehat g_t$ and the Lipschitz continuity of $c_t(\cdot)$, we have
\begin{align*}
&\frac{\eta}{G}\nabla h_t(\x_t)^{\top}(\x_t - \z) \\
& \leq  \frac{1}{2}\left(\|\z-\z_{t-1}\|_2^2 - \|\z-\z_{t}\|_2^2 \right)
- \frac{1}{2}\|\x_t - \z_t\|_2^2   + \frac{8\eta^2d^2}{\delta^2}\|\x_t-\z_{t-1}\|_2^2+ \frac{8\eta^2d^2}{\delta^2G^2}\max_{\x\in\P}|c_t(\x)-c_{t-1}(\x)|^2 \\
&\leq   \frac{1}{2}\left(\|\z-\z_{t-1}\|_2^2 - \|\z-\z_{t}\|_2^2 \right) +\left(\frac{8\eta^2d^2}{\delta^2} -
\frac{1}{2}\right)\|\x_t - \z_{t-1}\|_2^2   +  \frac{8\eta^2d^2}{G^2\delta^2}\max_{\x\in\P}|c_t(\x)-c_{t-1}(\x)|^2\\
& \leq   \frac{1}{2}\left(\|\z-\z_{t-1}\|_2^2 - \|\z-\z_{t}\|_2^2 \right)  +   \frac{8\eta^2d^2}{G^2\delta^2}\max_{\x\in\P}|c_t(\x)-c_{t-1}(\x)|^2
\end{align*}
where the last inequality follows from the fact $\eta\leq \delta/(4d)$. Taking summation over $t=1,\cdots, T$, and by convexity of $h_t(\x)$, we have
\begin{align*}
\sum_{t=1}^Th_t(\x_t) -\min_{\x\in\P}h_t((1-\alpha)\x)&\leq \frac{G}{2\eta} + \frac{8\eta d^2}{G\delta^2}\text{EVAR}^{cs}_T\leq \frac{4d}{\delta}\max\left(G, \sqrt{\text{EVAR}^{cs}_T}\right)
\end{align*}
Following the the proof of Theorem 8 in~\citep{DBLP:conf/colt/AgarwalDX10},
we have
\begin{align*}
&\E\left[\sum_{t=1}c_t(\x_t) -\sum_{t=1}^T c_t(\x)\right] \leq \E\left[\sum_{t=1}^T h_t(\x_t) -\sum_{t=1}^T h_t(\x)\right] + \E\left[\sum_{t=1}^T c_t(\x_t) - h_t(\x_t) - c_t(\x) + h_t(\x)\right]\\
 &\leq \E\left[\sum_{t=1}^T h_t(\x_t) -\sum_{t=1}^T h_t(\x)\right] + \E\left[\sum_{t=1}^T (\E_t[\tilde g_t(\x_t)]- \nabla c_t(\x_t))^{\top}(\x-\x_t))\right]\\
 &\leq \E\left[\sum_{t=1}^T h_t(\x_t) -\sum_{t=1}^T h_t(\x)\right] +  dL\delta T
\end{align*}
where the last inequality follows from $\|\x-\x_t\|\leq 2
$, $\E_t[\tilde g_t(\x_t)] =\E_t[\widehat g_t(\x_t, \e_{i_t})]$ and the following inequality
~\citep{DBLP:conf/colt/AgarwalDX10}. 
\begin{align*}
\|\E_t[\widehat g_t(\x_t,\e_{i_t})] - \nabla c_t(\x_t)]\|_2\leq \frac{dL\delta}{2}
\end{align*}
Then we have
\begin{align*}
\E\left[\sum_{t=1}^T\frac{1}{2}\left(c_t(\x_t)  + c_t(\x_t+\delta\e_{i_t})\right)\right]-\min\limits_{\x \in \P}
\left[\sum_{t=1}^T c_t(\x)\right]& \leq  \frac{4d}{\delta}\max\left(G, \sqrt{\text{EVAR}^{cs}_T}\right)\\
&+ \delta dLT+ \delta G T
+ \alpha GT
\end{align*}
Plugging the stated values of $\delta$ and $\alpha$ completes the proof.
\end{proof}\\


\end{document}